\documentclass[journal]{IEEEtran}
\usepackage[T1]{fontenc}
\usepackage{amsmath, amssymb, amsfonts}
\usepackage{graphicx}
\usepackage{algorithm}
\usepackage{algpseudocode}
\usepackage{cite}
\usepackage{hyperref}
\usepackage{booktabs}
\usepackage{tabularx}
\usepackage{array} %
\usepackage{makecell}

\newcolumntype{C}{>{\centering\arraybackslash}X}
\hypersetup{
    colorlinks=true,
    linkcolor=blue,
    filecolor=magenta,      
    urlcolor=cyan,
}

\usepackage{amsthm}

\newtheorem{theorem}{Theorem}[section]
\newtheorem{proposition}{Proposition}[section]
\newtheorem{lemma}{Lemma}[section]

\begin{document}

\title{Statistical-Geometric Degeneracy in UAV Search: A Physics-Aware Asymmetric Filtering Approach}

\author{Zhiyuan~Ren,~\IEEEmembership{Member,~IEEE},
        Yudong~Fang,
        Tao~Zhang,
        Wenchi~Cheng,~\IEEEmembership{Senior~Member,~IEEE},
        Ben~Lan
\thanks{This work was supported by the National Key Research and Development Program of China (No.2023YFC3011502).}%
\thanks{Zhiyuan Ren, Tao Zhang and Wenchi Cheng are with the School of Telecommunications Engineering, Xidian University, Xi'an 710071, China.}
\thanks{Yudong Fang is with Ministry of Emergency Management Big Data Center, Beijing 100013, China}%
\thanks{Ben Lan is with the Guangdong Nasasi Communications Technology Co., Ltd., Guangdong, China.}%
\thanks{Corresponding author: Yudong Fang (fangyudong9713@ustc.edu.cn).}%
}

\maketitle

\begin{abstract}
Post-disaster survivor localization using Unmanned Aerial Vehicles (UAVs) faces a fundamental physical challenge: the prevalence of Non-Line-of-Sight (NLOS) propagation in collapsed structures. Unlike standard Gaussian noise, signal reflection from debris introduces strictly non-negative ranging biases. Existing robust estimators, typically designed with symmetric loss functions (e.g., Huber or Tukey), implicitly rely on the assumption of error symmetry. Consequently, they experience a theoretical mismatch in this regime, leading to a phenomenon we formally identify as \textit{Statistical-Geometric Degeneracy} (SGD)—a state where the estimator stagnates due to the coupling of persistent asymmetric bias and limited observation geometry. While emerging data-driven approaches offer alternatives, they often struggle with the scarcity of training data and the sim-to-real gap inherent in unstructured disaster zones. In this work, we propose a physically-grounded solution, the \textit{AsymmetricHuberEKF}, which explicitly incorporates the non-negative physical prior of NLOS biases via a derived asymmetric loss function. Theoretically, we show that standard symmetric filters correspond to a degenerate case of our framework where the physical constraint is relaxed. Furthermore, we demonstrate that resolving SGD requires not just a robust filter, but specific \textit{bilateral information}, which we achieve through a co-designed active sensing strategy. Validated in a 2D nadir-view scanning scenario, our approach significantly accelerates convergence compared to symmetric baselines, offering a resilient building block for search operations where data is scarce and geometry is constrained.
\end{abstract}

\begin{IEEEkeywords}
Public Safety Networks, UAV-based Mobile Services, Non-Line-of-Sight (NLOS) Localization, Active Sensing, Robust Estimation.
\end{IEEEkeywords}

\section{Introduction}
\label{sec:intro}

\IEEEPARstart{I}n the aftermath of natural disasters, the rapid restoration of Public Safety Networks (PSNs) and the precise localization of survivors are paramount. Unmanned Aerial Vehicles (UAVs) have emerged as agile aerial base stations, capable of executing nadir-view scanning missions over inaccessible areas to detect User Equipment (UE) signals \cite{albanese_sardo_2022}. However, this operational setting presents a fundamental physical challenge that distinguishes it from standard open-field navigation: the prevalence of extreme Non-Line-of-Sight (NLOS) propagation within collapsed structures or urban canyons. Unlike the zero-mean Gaussian noise typical of Line-of-Sight (LOS) conditions, signal reflection and diffraction caused by rubble obstacles introduce strictly non-negative biases in range-based measurements \cite{yu_statistical_2009}. This physical constraint means that the observed distance is consistently equal to or greater than the true geometric distance, creating a structured error pattern that fundamentally violates the symmetry assumptions inherent in classical estimation theory.

\begin{figure}[!t]
    \centering
    \includegraphics[width=1\columnwidth]{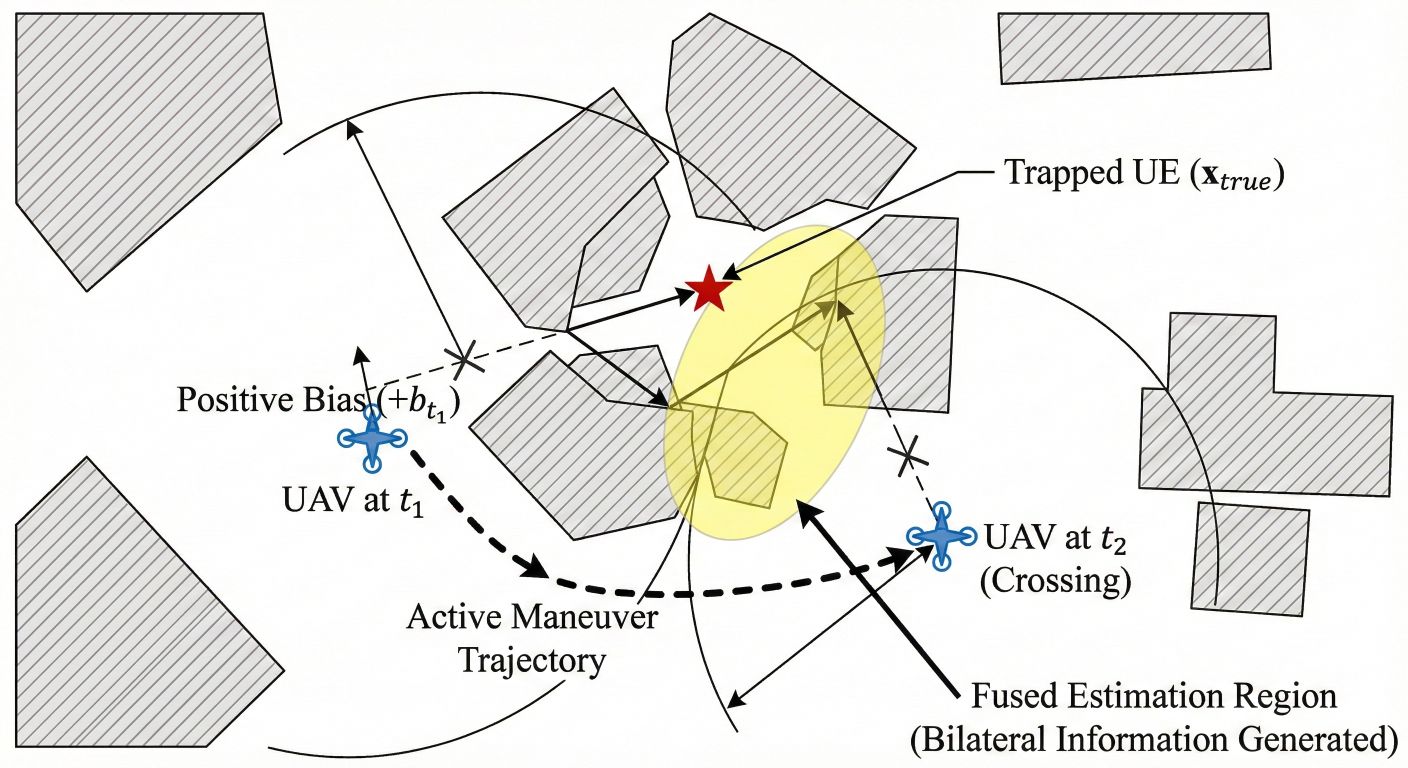} 
    \caption{Schematic illustration of the NLOS localization problem and the proposed active sensing solution in a post-disaster scenario.
The UAV acts as an aerial base station trying to localize a trapped User Equipment (UE). Grey polygons represent rubble obstacles blocking the Line-of-Sight (LOS). At time $t_1$, the NLOS signal reflection causes a significantly positively biased range measurement (outer arc). By actively maneuvering to $t_2$ (a "crossing" geometry), the UAV obtains a second, differently biased measurement. The intersection of these biased geometric constraints generates "bilateral information," significantly constraining the estimated target region despite the persistent NLOS conditions.}
    \label{fig:nlos_effect}
\end{figure}

\begin{table*}[t]
\centering
\caption{Comparison of Approaches: Assumptions and Failure Modes in Post-Disaster Networks.}
\label{tab:comparison}
\begin{tabular}{l c c c}
\toprule
Method Class & Core Assumption & Mechanism & Failure Mode in Post-Disaster Regime \\
\midrule
Standard EKF \cite{he_new_2025, yu_statistical_2009} & Zero-mean Gaussian & Linearization/Rejection & \textbf{Divergence}: Violates non-negative physical constraint. \\
Symmetric Robust \cite{ho_robust_2022, wang_robust_2023} & Symmetric Outliers & M-Estimation & \textbf{Stagnation (SGD)}: Cannot distinguish positive bias from geometry. \\
Data-Driven \cite{Wang2024Robust, sun_optimal_2024} & i.i.d. Training Data & Black-box Learning & \textbf{Generalization Gap}: Suffers from data scarcity in ruins. \\
\textbf{Ours (Proposed)} & \textbf{Physical Non-negativity} & \textbf{Asymmetric Physics} & \textbf{Robust \& Fast}: Resolves degeneracy via bilateral information. \\
\bottomrule
\end{tabular}
\end{table*}

Prior research has largely focused on mitigating these errors through statistical rejection or symmetric re-weighting. However, we identify a critical gap in how these methods model the underlying error structure in post-disaster scenarios. We categorize existing solutions into three classes, as summarized in Table~\ref{tab:comparison}. 

\textbf{Class I (Standard Estimation)} assumes Gaussian noise or relies on hypothesis testing to reject NLOS \cite{he_new_2025, yu_statistical_2009}, leading to immediate divergence or information loss when the mean error is non-zero. 

\textbf{Class II (Symmetric Robust Filters)}, such as those using Huber or Tukey loss functions \cite{ho_robust_2022, wang_robust_2023}, implicitly assume that large errors are symmetric outliers. While they effectively suppress bidirectional glitches, they struggle to distinguish persistent, one-sided physical biases from valid geometric information, leading to the stagnation phenomenon we explore in this work. 

\textbf{Class III (Data-Driven Approaches)} leverage Deep Learning or Reinforcement Learning to learn complex error mappings \cite{Wang2024Robust, sun_optimal_2024}. While promising, their practical deployment in unstructured ruins is often hindered by the fundamental scarcity of training data and the significant sim-to-real gap. 

Consequently, a robust, model-based approach that intrinsically respects the non-negative physical constraint—without relying on extensive pre-training—remains a missing piece in the public safety network literature.

We observe that the limitation of existing robust methods is not accidental but \textit{structural}. It stems from the decoupling of the estimator's loss function from the channel's physical constraints. We argue that to rigorously resolve the localization problem in rubble-filled environments, one must explicitly model the \textit{asymmetry} of the underlying error process. Building on this insight, we derive the \textit{AsymmetricHuberEKF} within a Bayesian framework. By analytically marginalizing the non-negative NLOS bias, we obtain a novel "one-sided" robust cost function. Crucially, this formulation reveals a unifying perspective: the standard symmetric Huber loss corresponds to a relaxed case of our model where the non-negativity constraint is ignored. Furthermore, our analysis exposes that breaking the Statistical-Geometric Degeneracy requires more than just a robust filter; it necessitates specific \textit{bilateral information}—geometric diversity that can only be guaranteed through active UAV maneuvering \cite{lin_wang_active_2010, kwon_rf_2023}, rather than passive observation.

The main contributions of this work are summarized as follows:
\begin{itemize}
    \item \textbf{Formalization of Statistical-Geometric Degeneracy (SGD):} We formally define SGD within the context of mobile positioning. We establish a theoretical framework that explains the root cause of estimation stagnation as the coupling between asymmetric error statistics and limited one-sided sensing geometries.
    \item \textbf{Unified Robust Modeling:} We derive the \textit{AsymmetricHuberEKF} from a Bayesian Maximum A Posteriori (MAP) perspective. We prove that this physically-grounded estimator acts as a unified framework, where the standard symmetric Huber filter appears as a special case when the non-negative physical constraint is relaxed.
    \item \textbf{Synergistic Active Sensing:} We identify that resolving SGD requires specific \textit{bilateral information}. We demonstrate that a simple geometric crossing maneuver serves as a sufficient condition to generate this information and restore system observability, thereby validating the theoretical link between physical priors and UAV motion control.
\end{itemize}

\subsection*{Scope and Non-Goals}
This work focuses on the \emph{mechanism} foundations of robust localization in post-disaster environments. 
\textbf{Scope:} We consider a 2D nadir-view scanning model where the UAV altitude is assumed known. This abstraction allows us to rigorously isolate and analyze the interaction between the estimator and the sensing geometry.
\textbf{Non-Goals:} We do not address 3D Simultaneous Localization and Mapping (SLAM) or complex flight dynamics control. These system-level engineering challenges are orthogonal to the algorithmic mechanism proposed in this study.

The remainder of this paper is organized to systematically unfold this theoretical framework. 
Section~\ref{sec:related_work} reviews related work and identifies the research gap. 
Section~\ref{sec:system} establishes the system model and formally defines the \textit{statistical-geometric degeneracy}. 
Section~\ref{sec:estimation} derives the physically-grounded \textbf{AsymmetricHuberEKF}, while Section~\ref{sec:planners} details the geometric planning strategies designed to generate the necessary bilateral information. 
Section~\ref{sec:simulations} presents comprehensive simulations to validate the proposed approach. Finally, Section~\ref{sec:limitations} discusses the theoretical boundaries and modeling assumptions, and Section~\ref{sec:conclusion} concludes the paper.

\section{Related Work}
\label{sec:related_work}
Recent literature in Vehicular Technology has prioritized the upper-layer optimization of Public Safety Networks (PSNs).
For instance, Qin et al.~\cite{qin_multi-type_2025} and Sun et al.~\cite{sun_optimal_2024} have developed sophisticated frameworks for task offloading and resource allocation to enhance rescue efficiency.
While vital, these works optimize \textit{service delivery}, contingent on the assumption that stable links with survivors exist.
Our work addresses the more fundamental challenge of \textit{service enablement}: robustly localizing survivors in denied environments to physically establish the connection.

This problem sits at the challenging intersection of robust NLOS estimation and active UAV sensing \cite{albanese_sardo_2022}.
However, existing research often treats these domains as orthogonal problems.
Robust filters are typically designed in isolation assuming fixed geometries, while active planning strategies \cite{lin_wang_active_2010, kwon_rf_2023} usually rely on idealized Gaussian noise models to pursue generic objectives.
We argue that this \textit{structural decoupling} creates a blind spot: standard planners fail to break the specific geometric symmetries that confuse robust filters.
Consequently, our work proposes a deep, synergistic co-design to resolve the \textit{Statistical-Geometric Degeneracy} identified in Section I.

\subsection{NLOS Mitigation and Robust Estimation}

\subsubsection{NLOS Identification and Rejection}
A classical paradigm in NLOS mitigation is to treat non-line-of-sight measurements as statistical outliers to be identified and discarded. Techniques range from hypothesis testing based on signal statistics \cite{he_new_2025, yu_statistical_2009} to machine learning classifiers \cite{Wang2024Robust}. For instance, Yu and Guo \cite{yu_statistical_2009} utilized statistical decision theory based on AOA and TOA to identify NLOS conditions. While effective in mixed environments, a primary limitation of this "rejection-based" paradigm is the loss of geometric information contained within the NLOS measurements \cite{zheng2024location, zheng2025location}. In post-disaster scenarios like rubble or urban canyons, where NLOS signals may be the dominant or even the sole source of information \cite{albanese_sardo_2022}, discarding these measurements often leads to an ill-posed estimation problem. Furthermore, while data-driven classifiers show promise, they face significant challenges in dynamic SAR scenarios due to the scarcity of training data for unstructured environments. In contrast, our approach seeks to \textit{utilize} rather than discard these biased measurements by modeling their physical properties.

\subsubsection{Symmetric Robust Estimation}
To retain measurement information, a widely adopted approach in vehicular navigation is M-estimation, which down-weights large residuals using robust loss functions like Huber or Tukey~\cite{ho_robust_2022}. This principle has been integrated into various filtering frameworks, including the Unscented Kalman Filter (UKF)~\cite{ho_enhanced_2021}, Iterated Cubature Kalman Filter (ICKF) for GNSS/INS integration~\cite{wang_robust_2023}, and Particle Filters for dynamic state estimation~\cite{ho_mobile_2021}. Additionally, the Interacting Multiple Model (IMM) framework can run robust and standard filters in parallel to handle maneuvering targets~\cite{hammes_robust_2011}. However, a fundamental theoretical limitation of these methods is their \textbf{symmetric nature}. By design, standard M-estimators satisfy $\rho(r) = \rho(-r)$, implicitly assuming that outliers are equally likely to be positive or negative. This assumption creates a \textit{theoretical mismatch} in wireless ranging, where the physical NLOS bias is strictly non-negative. Consequently, while these symmetric filters can mitigate sporadic sensor glitches, they often stagnate when facing the persistent, one-sided biases inherent to radio propagation in urban canyons, failing to resolve the geometric ambiguity.

\subsubsection{Advanced Modeling and Optimization}
Beyond symmetric filtering, more sophisticated strategies explicitly model the NLOS bias using heavy-tailed distributions~\cite{ciosas_nlos_2016, zhang_adaptive_2025} or leverage waveform-specific properties in Joint Communication and Sensing (JCAS)~\cite{nordio_joint_2025}. While precise, these methods often require specific physical layer modifications.

A more recent direction involves formulating the estimation as a constrained optimization problem. Methods based on the Alternating Direction Method of Multipliers (ADMM), for instance, can explicitly enforce the non-negativity of the NLOS bias by splitting the problem into simpler sub-problems~\cite{Sahu2022ADMM}. While powerful, these optimization-based methods introduce significant computational overhead and tuning complexities, which can be prohibitive for real-time UAV applications.

In contrast, our work builds upon a similar physical intuition regarding non-negativity but derives a solution via a statistically-motivated marginalization. By systematically deriving a protocol-agnostic \textbf{``one-sided Huber''} framework from a Bayesian MAP formulation, we offer a closed-form, computationally efficient solution. This provides a lightweight alternative for mobile nodes that enforces physical constraints without the complexity of iterative optimization.

\subsection{Active Sensing for Mobile Service Recovery}
Active sensing transforms the UAV from a passive observer into an intelligent information seeker, a concept widely adopted in robotics~\cite{Heng1997Active} and increasingly critical for autonomous mobile network nodes.

\subsubsection{Information-Theoretic Approaches in Vehicular Networks}
The dominant paradigm in this domain relies on information theory, where motion planning seeks to maximize a metric related to the Fisher Information Matrix (FIM)~\cite{La2015Cooperative, Napolitano2024ActiveFlat}. In the context of vehicular technology, Lin et al.~\cite{lin_wang_active_2010} applied this principle to cooperative target tracking, demonstrating how fusion estimation can guide UAV trajectories. More recently, Kwon and Guvenc~\cite{kwon_rf_2023} explored RF signal source localization using UAVs, highlighting the trade-off between accuracy and flight distance in resource-constrained systems.

A critical limitation, however, is that these frameworks typically assume a well-behaved measurement model (e.g., zero-mean Gaussian) and pursue \textbf{generic uncertainty reduction}. While effective in line-of-sight scenarios, they do not account for the specific \textit{statistical-geometric degeneracy} induced by asymmetric NLOS bias. Consequently, generic FIM-based objectives often fail to prioritize the specific ``crossing'' maneuvers required to recover system observability in denied environments, as they cannot distinguish between geometric poor conditioning and bias-induced uncertainty.

\subsubsection{Learning-based vs. Model-based Active Positioning}

While Reinforcement Learning (RL) has emerged as a powerful framework for learning complex motion policies that balance exploration and exploitation~\cite{Napolitano2024ActiveData, Yang2023Learning, Bartolomei2021Semantic, Selimovic2024MultiAgent}, its core requirements present critical hurdles for the practical demands of public safety networks. For instance, Wu et al.~\cite{wu_adaptive_2023} proposed an adaptive Q-Learning approach for multi-mission SAR path planning, and Sun et al.~\cite{sun_optimal_2024} utilized Double Deep Q-Learning for energy-efficient trajectory design.

However, these approaches face a fundamental hurdle in our context. \textbf{While data-driven approaches (e.g., Deep Learning) have shown promise in fixed infrastructure settings, they struggle in dynamic SAR scenarios due to the \textit{scarcity of training data} for unstructured environments and the risk of poor generalization to unseen geometries.} Furthermore, the ``black-box'' nature of learned policies raises reliability concerns in mission-critical mobile services, where unexpected emergent behavior could compromise rescue safety. Additionally, as highlighted in resource-constrained studies~\cite{kwon_rf_2023}, the computational overhead of onboard training is often prohibitive for lightweight aerial base stations.

In contrast, our work proposes a transparent, geometrically grounded motion protocol—the ``crossing-over'' maneuver. Derived from the analytical properties of our filter, it manufactures the ``bilateral information'' required for rapid convergence, offering a verifiable solution that mitigates the data dependency and safety risks inherent to learning-based approaches.

\section{System Model and Problem Formulation}
\label{sec:system}

In this section, we establish a unified mathematical framework to describe the process of a single mobile agent localizing a static target. We adopt a constructive approach: first defining the geometric observation setup, then modeling the specific physical statistics of signal propagation in rubble, and finally formulating the localization task as a constrained Bayesian inference problem.

\subsection{Geometric Setup and Scope Abstraction}
We consider a 2-dimensional search scenario, which represents the \textit{nadir-view scanning paradigm} prevalent in post-disaster operations. In this model, the UAV maintains a known safety altitude $H$ while cruising over the affected area. Consequently, the 3D localization problem is mathematically projected onto the 2D flight plane. Let $x \in \mathbb{R}^2$ denote the unknown horizontal position of the survivor's User Equipment (UE). The mobile agent's pose sequence is denoted by $\{s_t\}_{t=1}^T$, where $s_t \in \mathbb{R}^2$.

\textbf{Assumption 1 (Pose Knowledge):} We assume the agent's pose $s_t$ is known. This abstraction is intentional to strictly decouple the \textit{sensing geometry} problem (the focus of this work) from the \textit{navigation drift} problem (SLAM). It allows us to isolate the specific impact of NLOS biases on estimation stability.

\subsection{The Physical Generative Model}
At each time step $t$, the agent acquires measurements from a set of modalities $\mathcal{M}$ (e.g., RTT, AoA). We formulate a general generative observation model that explicitly accounts for both systematic hardware offsets and environment-induced physical biases:
\begin{equation}
y_{m,t} = h_m(x; s_t) + \delta_m + b_{m,t} + \varepsilon_{m,t}
\label{eq:measurement_model}
\end{equation}
where:
\begin{itemize}
    \item $h_m(x; s_t)$ is the ideal geometric observation function (e.g., Euclidean distance for RTT, $\arctan$ for AoA).
    \item $\delta_m$ represents the time-invariant \textit{systematic bias} (e.g., clock synchronization offset or antenna orientation error). This is a signed value, $\delta_m \in \mathbb{R}$.
    \item $\varepsilon_{m,t} \sim \mathcal{N}(0, \sigma_m^2)$ is the thermal measurement noise.
    \item $b_{m,t}$ is the \textit{environment-induced NLOS bias}, which exhibits distinct physical characteristics depending on the modality:
\end{itemize}

\subsubsection{Asymmetric Physics of RTT}
For Round-Trip Time (RTT) measurements, $b_{m,t}$ represents the excess path length caused by signal reflection. A fundamental physical law in this context is that the signal path cannot be shorter than the straight line. Thus, the bias is governed by a \textbf{Strict Non-Negativity Constraint}:
\begin{equation}
    b_{m,t} \ge 0, \quad \forall t, m \in \mathcal{M}_{\text{RTT}}
\end{equation}
We model this bias as a latent variable drawn from a one-sided distribution (e.g., Exponential), $b_{m,t} \sim p_{\text{NLOS}}(b; \lambda)$.

\subsubsection{Symmetric Physics of AoA}
For Angle of Arrival (AoA), signal scattering can cause deviations in either clockwise or counter-clockwise directions. Thus, the bias $b_{m,t}$ is modeled as a symmetric outlier process with no sign constraint, typically following a heavy-tailed distribution.

\subsection{Unified Problem Formulation}
Given the measurement history $\mathcal{Y}_T$, our goal is to jointly estimate the target position $x$ and the systematic biases $\{\delta_m\}$. From a Bayesian perspective, this corresponds to finding the Maximum A Posteriori (MAP) estimate.
By incorporating the physical priors defined above, the objective function $J(x, \{\delta_m\})$ generally takes the form:
\begin{equation}
    J = \underbrace{\sum_{t} \mathcal{L}_{\text{fit}}(y_t | x, \delta_m, b_t)}_{\text{Data Fidelity}} + \underbrace{\sum_{t} \mathcal{R}_{\text{phy}}(b_t)}_{\text{Physical Prior}}
\end{equation}
The specific derivation of $\mathcal{R}_{\text{phy}}$ leads to different robust estimators.

It is crucial to note that standard robust approaches (e.g., Huber or Tukey filters) can be viewed as a \textit{special case} of this formulation where the non-negative physical constraint on RTT bias is \textbf{relaxed} (i.e., assuming $b_{m,t} \in \mathbb{R}$ and symmetric). Our framework differs by rigorously retaining this constraint, ensuring that the estimator structure remains consistent with the channel physics.

\section{The Proposed Robust Estimation Framework}
\label{sec:estimation}
Building upon the system model, this section details the robust fusion estimation algorithm designed to solve the localization problem. The core of this framework is to formulate the problem within a Bayesian Maximum a Posteriori (MAP) paradigm and, from this, systematically derive a robust cost function that intrinsically and asymmetrically handles the positive bias characteristic of NLOS propagation.

\subsection{Bayesian Modeling via Maximum a Posteriori (MAP) Estimation}

To invert the physical generative model established in Section~\ref{sec:system}, we formulate the localization problem as maximizing the posterior probability of the unknown parameters given the measurement history. We seek the MAP estimate for the augmented parameter set, which includes the target position $x$, the modality-specific systematic biases $\{\delta_m\}$, and the latent NLOS biases $\{b_{m,t}\}$ for RTT measurements.

We systematically translate the physical constraints identified in Section III into statistical priors:
\begin{enumerate}
    \item \textbf{Target Position $x$:} We assign a non-informative flat prior $p(x) \propto 1$, ensuring the estimate is driven solely by observational data.
    \item \textbf{Systematic Bias $\delta_m$:} Consistent with hardware calibration errors, we assign a zero-mean Gaussian prior $\delta_m \sim \mathcal{N}(0, \sigma_{\delta,m}^2)$.
    \item \textbf{NLOS Bias $b_{m,t}$ (The Physical Constraint):}
    \begin{itemize}
        \item \textit{RTT (Asymmetric):} To strictly enforce the non-negativity law ($b_{m,t} \ge 0$), we assign a one-sided \textbf{exponential prior} $b_{m,t} \sim \text{Exp}(\lambda_m)\mathbb{I}(b_{m,t} \ge 0)$. This penalizes large biases while forbidding physically impossible negative values.
        \item \textit{AoA (Symmetric):} For angular deviations, we adopt a \textbf{symmetric heavy-tailed prior} (specifically, Huber's least-favorable distribution). This accounts for bidirectional outliers without imposing sign constraints.
    \end{itemize}
\end{enumerate}

Minimizing the negative log-posterior yields the following hybrid objective function:
\begin{equation}
\begin{split}
\min_{\substack{x, \{\delta_m\} \\ \{b_{m,t}\ge0\}_{m \in \mathcal{M}_{\text{RTT}}} }} \bigg( & \underbrace{\sum_{m \in \mathcal{M}_{\text{RTT}}, t} \frac{\big(y_{m,t}-h_m(x;s_t)-\delta_m-b_{m,t}\big)^2}{2\sigma_m^2}}_{\text{RTT Data Fidelity (Explicit NLOS State)}} \\
& + \underbrace{\sum_{m \in \mathcal{M}_{\text{AoA}}, t} \rho_{\text{aoa}}\big(y_{m,t}-h_m(x;s_t)-\delta_m\big)}_{\text{AoA Robust Fidelity (Implicit)}} \\
& + \sum_{m} \frac{\delta_m^2}{2\sigma_{\delta,m}^2} + \sum_{m \in \mathcal{M}_{\text{RTT}}, t} \lambda_m b_{m,t} \bigg)
\end{split}
\label{eq:full_map}
\end{equation}
where $\rho_{\text{aoa}}(\cdot)$ represents the robust Huber loss derived from the AoA prior. This formulation explicitly couples the estimation of the target $x$ with the resolution of the latent NLOS variables $b_{m,t}$, subject to their respective physical laws.

\subsection{The Hybrid Robust Cost Function via Analytical Marginalization}

Direct optimization of \eqref{eq:full_map} is computationally challenging due to the high dimensionality of the latent NLOS variables $\{b_{m,t}\}$. However, we observe that the problem possesses a decoupled structure that allows us to \textbf{analytically marginalize out} these nuisance parameters. This process effectively "compiles" the physical prior directly into a new robust loss function.

\subsubsection{Derivation of the One-sided Huber Loss (RTT)}
For RTT measurements ($m \in \mathcal{M}_{\text{RTT}}$), we isolate the optimization subproblem with respect to each NLOS bias $b_{m,t}$. Let $r_{m,t} = y_{m,t}-h_m(x;s_t)-\delta_m$ denote the raw residual. The local optimization becomes:
\begin{equation}
\min_{b_{m,t}\ge0} \left( \frac{(r_{m,t}-b_{m,t})^2}{2\sigma_m^2} + \lambda_m b_{m,t} \right)
\end{equation}
This is a quadratic program subject to a non-negativity constraint. We derive its exact solution as follows:

\begin{proposition}[Analytical Solution for NLOS Bias]
The optimization subproblem for the RTT NLOS bias $b_{m,t}$ has a unique, closed-form solution given by a soft-thresholding operator:
\begin{equation}
b_{m,t}^* = \max\left(0, \ r_{m,t} - \lambda_m\sigma_m^2\right)
\label{eq:soft_threshold}
\end{equation}
\end{proposition}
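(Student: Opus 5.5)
The plan is to treat the subproblem as minimizing a one-dimensional, strictly convex quadratic over the half-line $[0,\infty)$. Fix $m\in\mathcal{M}_{\text{RTT}}$ and $t$, and abbreviate $r=r_{m,t}$, $\sigma=\sigma_m>0$, $\lambda=\lambda_m>0$. Define
\begin{equation*}
f(b)=\frac{(r-b)^2}{2\sigma^2}+\lambda b .
\end{equation*}
Then $f''(b)=1/\sigma^2>0$, so $f$ is strictly convex and coercive on $\mathbb{R}$. Its restriction to the closed convex set $[0,\infty)$ therefore has a minimizer, and strict convexity makes that minimizer unique. This settles the existence and uniqueness part of the Proposition before any explicit computation.

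To identify the minimizer, I will complete the square:
\begin{equation*}
f(b)=\frac{\bigl(b-(r-\lambda\sigma^2)\bigr)^2}{2\sigma^2}+\text{const},
\end{equation*}
where the constant does not depend on $b$. Minimizing $f$ over $[0,\infty)$ is therefore the same as projecting the unconstrained minimizer $\bar b=r-\lambda\sigma^2$ onto $[0,\infty)$. That projection is $\max(0,\bar b)$, which is exactly \eqref{eq:soft_threshold}.

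As a cross-check, or as an alternative written proof, I can verify the KKT conditions directly. Stationarity reads $(b-r)/\sigma^2+\lambda-\mu=0$, together with $\mu\ge0$, $b\ge0$ and $\mu b=0$. I split into two cases. If $r>\lambda\sigma^2$, take $\mu=0$, which gives $b=r-\lambda\sigma^2>0$. If $r\le\lambda\sigma^2$, take $b=0$, which gives $\mu=\lambda-r/\sigma^2\ge0$. Because the problem is convex, KKT points are global minimizers, and uniqueness follows from strict convexity.

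There is no real obstacle here. The only point that needs care is the boundary case $r=\lambda\sigma^2$: both branches give $b^*=0$ there, so the formula is consistent. It is also worth noting explicitly that uniqueness needs $\sigma_m>0$, which holds because $\varepsilon_{m,t}$ is nondegenerate Gaussian noise. I will also point out that substituting $b^*$ back into the objective produces the one-sided Huber loss used in the sequel. That substitution is not needed for the Proposition itself.
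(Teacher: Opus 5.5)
Your proposal is correct, and your KKT cross-check is exactly the paper's proof: the stationarity condition $f'(b)-\mu=0$, the two cases from complementary slackness giving $b=r-\lambda\sigma^2$ when $r>\lambda\sigma^2$ and $\mu=\lambda-r/\sigma^2\ge 0$ when $r\le\lambda\sigma^2$, and uniqueness from strict convexity. Your primary completing-the-square argument identifies $b^\star$ as the projection of the unconstrained minimizer $r-\lambda\sigma^2$ onto $[0,\infty)$. It is an equivalent and slightly more elementary route that avoids multipliers altogether and makes the soft-thresholding form immediate.
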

\begin{IEEEproof}
    The proof is provided in Appendix~\ref{app:onesided_huber}, Proposition~\ref{prop:kkt_solution}.
\end{IEEEproof}

\begin{figure}[!t]
    \centering
    \includegraphics[width=0.9\columnwidth]{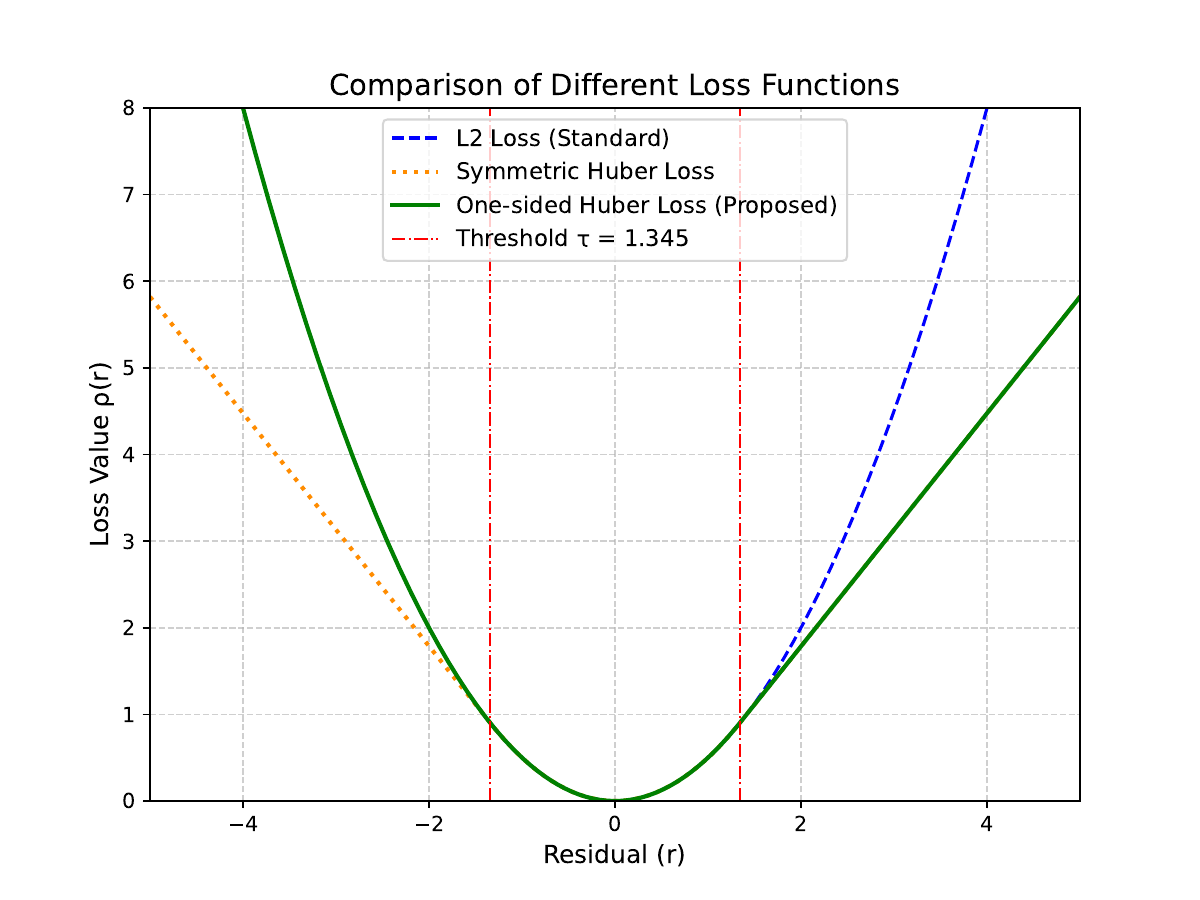} 
    \caption{Visual comparison of different loss functions versus the residual $r$. The standard L2 loss (blue, dashed) penalizes all errors quadratically. The symmetric Huber loss (orange, dotted) is robust to outliers in both directions. Our proposed one-sided Huber loss (green, solid) is asymmetric: it correctly treats large positive residuals (indicative of NLOS) with linear penalty, while penalizing negative residuals quadratically, thus retaining sensitivity to potentially LOS measurements.}
    \label{fig:loss_functions}
\end{figure}

By substituting this optimal $b_{m,t}^*$ back into the original MAP objective, we eliminate the latent variables and obtain a marginalized cost function, $\rho_{\text{rtt}}(r_{m,t})$, which we term the \textbf{``one-sided Huber'' loss}:
\begin{equation}
\rho_{\text{rtt}}(r) = 
\begin{cases} 
  \frac{1}{2\sigma^2} r^2 & \text{if } r \le \tau \\
  \lambda r - \frac{1}{2}\lambda^2\sigma^2 & \text{if } r > \tau
\end{cases}
\label{eq:one_sided_huber}
\end{equation}
where the threshold is physically determined by $\tau = \lambda\sigma^2$. This asymmetry (visualized in Fig.~\ref{fig:loss_functions}) is the direct mathematical consequence of the non-negative physical prior.

\begin{proposition}[Properties of the One-sided Huber Loss]
The marginalized cost function $\rho_{\text{rtt}}(r)$ defined in \eqref{eq:one_sided_huber} is convex and continuously differentiable ($C^1$) everywhere.
\end{proposition}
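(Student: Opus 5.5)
We will prove the two properties of $\rho_{\text{rtt}}$ in \eqref{eq:one_sided_huber} in two ways. The main route is direct piecewise calculus. The second route is structural: $\rho_{\text{rtt}}$ is a partial minimization of a jointly convex function, which gives convexity without any casework.

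\textbf{Differentiability.} On each open piece the function is a polynomial, so it is smooth there: $\rho_{\text{rtt}}'(r)=r/\sigma^2$ for $r<\tau$ and $\rho_{\text{rtt}}'(r)=\lambda$ for $r>\tau$. At the junction $\tau=\lambda\sigma^2$ we check continuity of the value:
\begin{equation*}
\frac{\tau^2}{2\sigma^2}=\frac{\lambda^2\sigma^2}{2},\qquad
\lambda\tau-\frac{1}{2}\lambda^2\sigma^2=\frac{\lambda^2\sigma^2}{2}.
\end{equation*}
The one-sided derivatives also agree, since $\tau/\sigma^2=\lambda$. A continuous function whose one-sided derivatives at a point exist and agree is differentiable there. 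Alternatively, the mean value theorem applied to each side gives the same conclusion. The resulting derivative
\begin{equation*}
\rho_{\text{rtt}}'(r)=\min\left(r/\sigma^2,\ \lambda\right)
\end{equation*}
is a minimum of two continuous functions, hence continuous. Therefore $\rho_{\text{rtt}}\in C^1$.

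\textbf{Convexity.} The derivative $\min(r/\sigma^2,\lambda)$ is nondecreasing in $r$. A differentiable function on $\mathbb{R}$ with a monotone nondecreasing derivative is convex, which settles the claim.

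As a cross-check, consider $g(r,b)=\frac{(r-b)^2}{2\sigma^2}+\lambda b$ together with the indicator of $\{b\ge0\}$. This function is jointly convex in $(r,b)$, because it is a convex quadratic plus a linear term plus the indicator of a convex set. Partial minimization over $b$ preserves convexity, so $\rho_{\text{rtt}}(r)=\min_{b\ge0}g(r,b)$ is convex. This is consistent with the preceding Proposition, whose minimizer $b^*$ reproduces exactly \eqref{eq:one_sided_huber}.

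\textbf{Main obstacle.} There is no substantive difficulty. The only care needed is at the kink $r=\tau$: one must verify that both the value and the slope match there. This is exactly where the choice $\tau=\lambda\sigma^2$ matters, and the value check is where the constant $-\frac{1}{2}\lambda^2\sigma^2$ is used. We also note that $\rho_{\text{rtt}}$ is not $C^2$, since its second derivative jumps from $1/\sigma^2$ to $0$ at $\tau$, so the statement cannot be strengthened in that direction.
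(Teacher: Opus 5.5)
Your proof is correct and follows essentially the same route as the paper: you match the value and the slope of the two pieces at $\tau=\lambda\sigma^2$ to get $C^1$, then deduce convexity from the derivative $\rho_{\text{rtt}}'(r)=\min(r/\sigma^2,\lambda)$ being nondecreasing, which is a slightly cleaner way of stating the paper's ``second derivative nonnegative where it exists.'' Your partial-minimization cross-check is a valid extra argument that the paper does not give; it obtains convexity directly from the MAP subproblem without any casework.
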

\begin{IEEEproof}
    The derivation is given in Proposition~\ref{prop:marginalized_cost}, and the proof of convexity and $C^1$ smoothness is detailed in Proposition~\ref{prop:huber_properties} in Appendix~\ref{app:onesided_huber}.
\end{IEEEproof}

\subsubsection{Symmetric Huber Loss (AoA)}
For AoA measurements, consistent with the symmetric heavy-tailed prior, the standard \textbf{symmetric Huber loss} is naturally recovered:
\begin{equation}
\rho_{\text{aoa}}(r) = 
\begin{cases} 
  \frac{1}{2\sigma^2} r^2 & \text{if } |r| \le \tau \\
  \frac{k}{\sigma} |r| - \frac{1}{2}k^2 & \text{if } |r| > \tau
\end{cases}
\label{eq:symmetric_huber}
\end{equation}
where $\tau = k\sigma$.

\subsubsection{The Final Unified Objective}
Combining the asymmetric (RTT) and symmetric (AoA) components, the final reduced-dimension optimization problem becomes:
\begin{equation}
\min_{x, \{\delta_m\}} \bigg( \sum_{m \in \mathcal{M}_{\text{RTT}}, t} \rho_{\text{rtt}}(r_{m,t}) + \sum_{m \in \mathcal{M}_{\text{AoA}}, t} \rho_{\text{aoa}}(r_{m,t}) + \sum_m \frac{\delta_m^2}{2\sigma_{\delta,m}^2} \bigg)
\label{eq:final_objective}
\end{equation}
This formulation represents a \textbf{physically-grounded fusion}: it applies linear penalties to large positive RTT errors (NLOS) while maintaining quadratic sensitivity to negative residuals (likely LOS or noise), effectively resolving the "theoretical mismatch" identified in Section I.

\subsection{Parameter Equivalence and Implementation Strategy}
Our hybrid model introduces physical hyperparameters ($\lambda_m$ for RTT) that differ from the standard statistical tuning constants ($k_m$) found in generic robust libraries. To ensure practical deployability, we rigorously establish the equivalence between these two representations.

\subsubsection{Bridging Physics and Implementation}
The threshold for the one-sided RTT cost function, derived in \eqref{eq:soft_threshold}, is physically determined by $\tau_m = \lambda_m \sigma_m^2$. Here, $1/\lambda_m$ represents the expected physical magnitude of the NLOS bias (from the exponential prior). Conversely, standard implementations typically define a threshold on the \textit{normalized} residual ($r/\sigma$) using a unitless constant $k_m$.
To align our theoretical derivation with standard engineering practice, we equate the effective thresholds:
\begin{equation}
\tau_m = \lambda_m \sigma_m^2 = k_m \sigma_m \implies k_m = \lambda_m \sigma_m
\label{eq:k_lambda_relation}
\end{equation}
This relationship, formally proven in Proposition~\ref{prop:param_equivalence} (see Appendix~\ref{app:onesided_huber}), serves as a translation layer. It clarifies that choosing a "robust tuning constant" $k_m$ is mathematically equivalent to positing a specific physical NLOS scale $\lambda_m$ relative to the noise floor $\sigma_m$. For instance, a smaller $k_m$ implies a smaller $\lambda_m$ (larger expected bias), forcing the filter to treat a wider range of residuals as linear (NLOS) rather than quadratic (LOS).

\subsubsection{Path to Adaptive Estimation}
While we utilize fixed hyperparameters for validation, our framework naturally extends to adaptive estimation in unknown environments. The proposed probabilistic model allows for the online learning of $\lambda_m$ (and thus $k_m$) via an Expectation-Maximization (EM) approach. Specifically:
\begin{itemize}
    \item \textbf{E-Step:} Estimate the latent NLOS biases $b_{m,t}^*$ given the current parameter guess.
    \item \textbf{M-Step:} Update $\lambda_m$ by taking the inverse of the sample mean of these estimated biases, $\lambda_m^{new} = (\frac{1}{N}\sum b_{m,t}^*)^{-1}$.
\end{itemize}
This adaptability is essential for future deployment in dynamic disaster zones where the severity of NLOS ($\lambda_m$) varies spatially.

\subsection{Geometric Conditioning and the Necessity of Bilateral Information}

While the derived one-sided Huber loss is statistically optimal for asymmetric noise, its sequential performance is intrinsically coupled with the quality of the sensing geometry. To formalize this estimator-geometry interaction, we analyze the problem's conditioning using the \textbf{Expected Robust Curvature Matrix (ERCM)}.

The RTT component of the ERCM, denoted $\mathcal{C}_{\text{RTT}}(x)$, describes the aggregate curvature (second-order information) contributed by the range measurements. As formally derived in Lemma~\ref{lem:ercm_form} (Appendix~\ref{app:ercm_proofs}), it takes the form:
\begin{equation}
\mathcal{C}_{\text{RTT}}(x) \approx \frac{1}{\sigma_r^2}\sum_{t \in \mathcal{T}_{\text{lin}}} H_t H_t^\top
\end{equation}
where $\mathcal{T}_{\text{lin}} = \{t \mid r_t(x) \le \tau_t\}$ is the subset of time steps where the residual falls within the quadratic (linear) region. 

\textbf{Mechanism of Degeneracy.} Crucially, the second derivative of our robust loss vanishes ($\rho''_{\text{rtt}} = 0$) in the saturation region ($r > \tau$). This implies that the curvature of the loss landscape is constructed \textit{exclusively} from measurements that produce small or non-positive residuals. This structural property allows us to formally define \textbf{Bilateral Information}: a geometric state where the observation sequence contains a sufficiently diverse subset of non-saturated samples to ensure the ERCM is positive definite. As proven in Proposition~\ref{prop:strong_convexity} (Appendix~\ref{app:ercm_proofs}), the presence of bilateral information endows the objective function with \textit{restricted strong convexity}, a prerequisite for fast convergence.

\textbf{Theoretical Necessity of Active Sensing.} This analysis reveals the fundamental vulnerability of passive estimation. If a passive agent's trajectory relative to the target produces a persistent sequence of large, positive residuals (unilateral information), all samples effectively become saturated. In this scenario, as stated in Proposition~\ref{prop:necessity} (Appendix~\ref{app:ercm_proofs}), the ERCM becomes rank-deficient or even zero. This creates a flat "ambiguity valley" in the optimization landscape where second-order solvers stagnate. Consequently, within our framework, acquiring bilateral information is not merely beneficial; it is a \textbf{theoretical necessity} to ensure the problem remains well-posed. This rigid requirement directly motivates the co-design of the active motion planning strategies detailed in the next section.

\section{Motion Planning Strategies for Validation}
\label{sec:planners}

The theoretical analysis in Section~\ref{sec:estimation}-D established that resolving the statistical-geometric degeneracy requires the active generation of "bilateral information." To validate the efficacy of our proposed \textbf{AsymmetricHuberEKF} in this context, we select and implement two representative active planning strategies. These planners, one embodying computational efficiency and the other information-theoretic optimality, allow us to verify whether the "bilateral" condition identified in our theory is indeed the key to restoring observability.

\subsection{Heuristic-based Planner: Reactive Crossing}
The first strategy is a lightweight, heuristic-based planner we term \textbf{Reactive Crossing}. Its logic, presented in Algorithm~\ref{alg:reactive_crossing}, is geometrically intuitive: at each time step, it commands the UAV to fly towards a point situated on the far side of its current target estimate.

While this algorithm is structurally simple, we employ it here because its mechanism directly serves the theoretical goal of manufacturing bilateral information. As proven in Appendix~\ref{app:crossing_proofs}, such a crossing maneuver is a sufficient condition to improve the conditioning of the estimation problem (ERCM), provided a non-saturated residual can be generated. Its key advantage lies in its constant-time complexity ($O(1)$), making it an ideal candidate for demonstrating the performance of our filter in resource-constrained, real-time settings.

\begin{algorithm}[htbp]
    \caption{The Reactive Crossing Planner}
    \label{alg:reactive_crossing}
    \begin{algorithmic}[1]
    \State \textbf{Input:} Current agent pose $s_t$, current target estimate $\hat{x}_t$, step size $\eta$, crossing distance $\ell$.
    \If{$\|\hat{x}_t - s_t\| < \epsilon_{\text{stop}}$}
        \State \Return $s_t$ \Comment{Stay if close enough}
    \Else
        \State \Comment{Define the crossing-over direction}
        \State $d \leftarrow (\hat{x}_t - s_t) / \|\hat{x}_t - s_t\|$
        \State \Comment{Define the target point on the far side}
        \State $c \leftarrow \hat{x}_t + \ell \cdot d$
        \State \Comment{Compute the next move towards the crossing point}
        \State $v \leftarrow (c - s_t) / \|c - s_t\|$
        \State $s_{t+1} \leftarrow s_t + \eta \cdot v$
        \State \Return $s_{t+1}$
    \EndIf
    \end{algorithmic}
\end{algorithm}

\subsection{Optimization-based Planner: FIM E-Optimality}
To establish a rigorous benchmark, the second strategy is an optimization-based planner rooted in information theory. This planner seeks to maximize the E-optimality criterion of the Fisher Information Matrix (FIM) at the next time step. The planning objective is formulated as:
\begin{equation}
s_{t+1}^* = \arg\max_{s \in \mathcal{S}} \lambda_{\min}\big(\mathcal{I}(\hat{x}_t, s)\big)
\label{eq:fim_planning_objective}
\end{equation}
where $\mathcal{S}$ is the set of reachable candidate poses, and $\mathcal{I}(\hat{x}_t, s)$ is the FIM expected from taking a measurement at candidate pose $s$, given the current estimate $\hat{x}_t$. The FIM itself is calculated as $\mathcal{I} = H^\top R^{-1} H$.

This approach represents a state-of-the-art generic active sensing strategy. At each step, it evaluates a set of candidate moves around the UAV and selects the one predicted to yield the greatest reduction in worst-case uncertainty. However, this optimality comes at a significant computational cost (approximately $\mathcal{O}(|\mathcal{S}| \cdot d^3)$ per step), as it requires matrix computations and eigenvalue decompositions for every candidate pose. This planner serves to demonstrate our filter's ability to support complex, optimization-based motion strategies and allows for a clear analysis of the performance-complexity trade-offs.

\section{Simulation Experiments and Analysis}
\label{sec:simulations}

We conduct extensive Monte Carlo simulations to systematically validate the theoretical claims derived in Section~\ref{sec:estimation}. Our evaluation focuses on three core questions:
\begin{enumerate}
    \item \textbf{Mechanism Verification:} Does the standard symmetric filter stagnate due to "Statistical-Geometric Degeneracy" (Proposition~\ref{prop:necessity})?
    \item \textbf{Resolution via Bilateral Information:} Does the "crossing" maneuver successfully generate the bilateral information required to break this degeneracy (Proposition~\ref{prop:strong_convexity})?
    \item \textbf{Performance-Complexity Trade-off:} How does our physically-grounded AsymmetricHuberEKF perform against SOTA optimization methods in terms of efficiency?
\end{enumerate}

\subsection{Simulation Setup}
We simulate a UAV searching for a static target at $x=[50, 50]^\top$ within a $100 \times 100$~m 2D area (starting at $s_0=[10, 10]^\top$). We focus on the 2D plane to rigorously isolate the geometric interaction between the estimator and the trajectory, though the framework generalizes to 3D by defining crossing planes relative to gravity.

\textbf{Environmental Parameters:} We define a severe NLOS scenario to stress-test the algorithms:
\begin{itemize}
    \item \textbf{High NLOS Probability:} $p_{\text{NLOS}}=0.9$.
    \item \textbf{Bias Configuration:} When NLOS occurs, RTT bias follows $b_r \sim \text{Exp}(8.0)$, strictly non-negative. AoA bias follows $b_\theta \sim \mathcal{N}(0, 5.0^2)$.
    \item \textbf{Systematic Errors:} Fixed offsets $\delta_r = 1.5$~m, $\delta_\theta = -3.0^\circ$.
\end{itemize}

\textbf{Planner Settings:} For reactive strategies, we set the step size $\eta=5.0$~m and crossing distance $\ell=20.0$~m, values chosen to balance convergence speed with overshoot risk based on preliminary tuning.

\subsection{Comparison Framework}

To answer the core questions, we define the following comparative components:

\subsubsection{Filters Under Test}
\begin{itemize}
    \item \textbf{Huber-EKF (Symmetric Baseline):} The standard robust filter. It assumes a symmetric error distribution ($b \in \mathbb{R}$), representing the "relaxed" model that ignores the specific physics of NLOS propagation.
    \item \textbf{AsymmetricHuberEKF (Our Proposal):} The novel, physically-grounded filter derived in Section~\ref{sec:estimation} that explicitly enforces the non-negativity constraint.
\end{itemize}

\subsubsection{Planners Used for Validation}
\begin{itemize}
    \item \textbf{Passive (Lawnmower):} A control group to generate "unilateral information," demonstrating the inevitability of stagnation.
    \item \textbf{Heuristic Active (Reactive Crossing):} The proposed lightweight strategy ($O(1)$) designed to manufacture "bilateral information."
    \item \textbf{Optimization Active (FIM E-Optimality):} A state-of-the-art benchmark that optimizes the Fisher Information Matrix but incurs high computational cost.
\end{itemize}

\textit{Notation:} We label combined systems as ``\textbf{Filter (Planner)}'' (e.g., \textbf{Proposed (FIM)}).

\subsubsection{Evaluation Metrics}
\begin{itemize}
    \item \textbf{Final RMSE:} Position error averaged over 200 Monte Carlo runs.
    \item \textbf{Time-to-Threshold:} Simulation steps required for RMSE to settle below 2.5~m. This proxies the ability to break degeneracy.
    \item \textbf{Computational Cost:} Average CPU time (ms) per decision step.
\end{itemize}

\subsection{Performance Analysis in the Canonical Scenario}

\begin{figure}[htbp]
    \centering
    \includegraphics[width=\columnwidth]{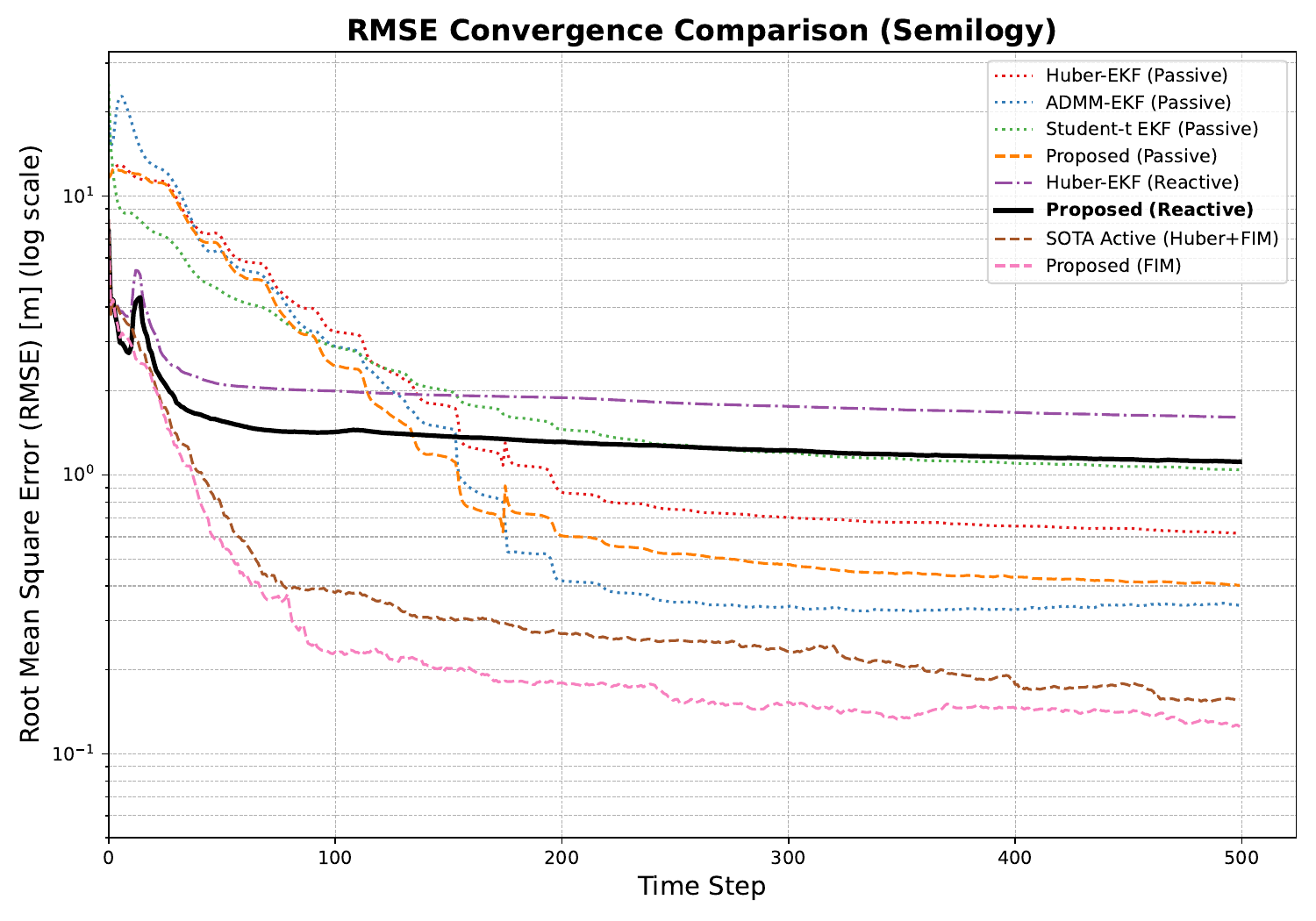}
    \caption{Active planning dramatically accelerates convergence in the canonical scenario ($\sigma_r=1.5$~m). Note the steep initial RMSE drop of all active methods compared to the slow, gradual decline of passive ones. Our lightweight \textbf{Proposed (Reactive)} system (black, solid) shows highly competitive initial convergence, while the computationally intensive FIM-based methods achieve the best final accuracy, highlighting a key performance-versus-complexity trade-off.}
    \label{fig:rmse_semilogy}
\end{figure}

\begin{figure}[htbp]
    \centering
    \includegraphics[width=\columnwidth]{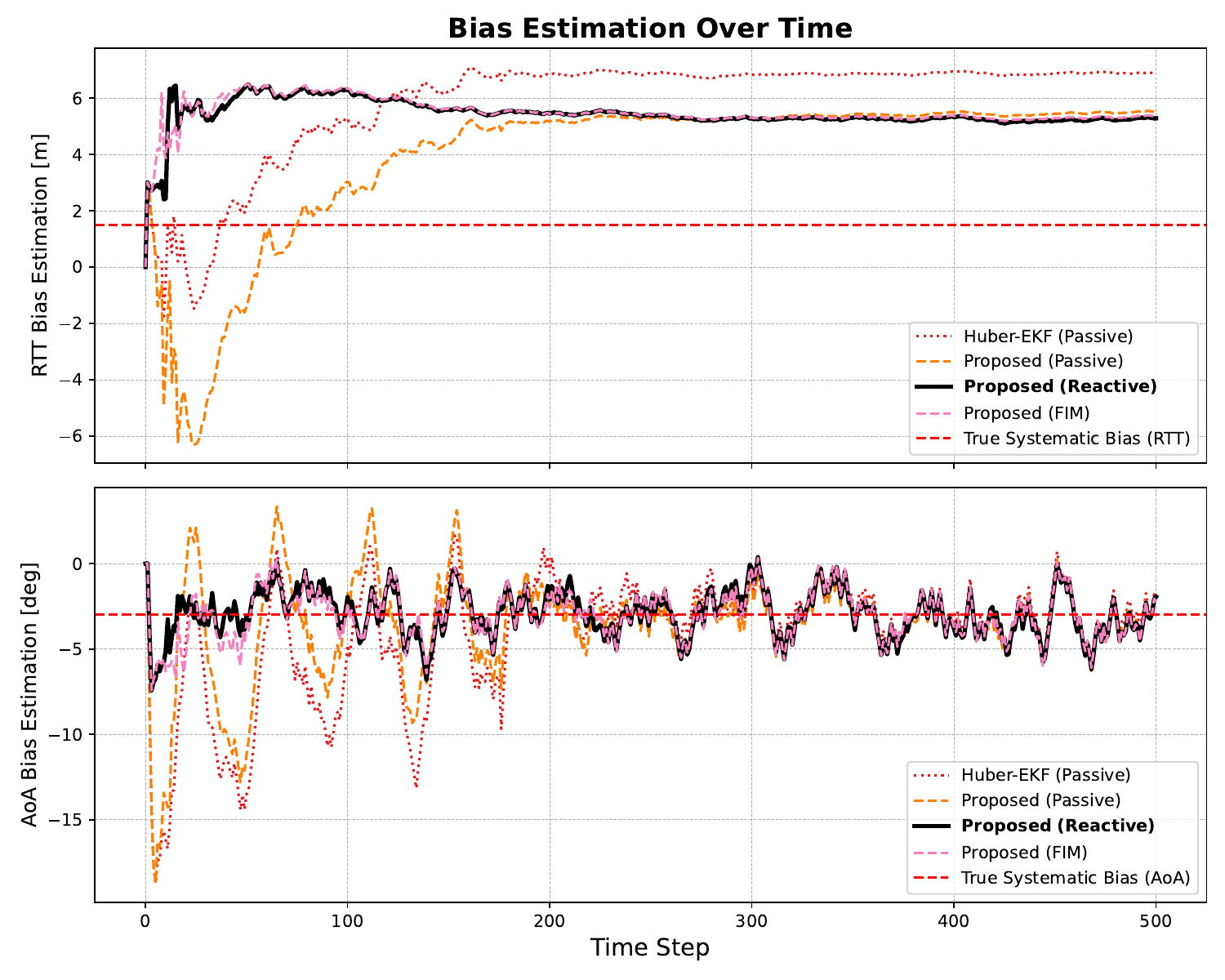}
    \caption{Note how the \textbf{Proposed} methods correctly converge to a higher effective RTT bias ($\approx 5.5$~m) due to persistent NLOS, unlike the standard \textbf{Huber-EKF} which incorrectly converges to the hardware-only bias (1.5~m). For AoA, all methods converge to the true bias ($-3^\circ$), with our methods achieving faster stabilization.}
    \label{fig:bias_estimation_new}
\end{figure}

\begin{figure}[htbp]
    \centering
    \includegraphics[width=0.9\columnwidth]{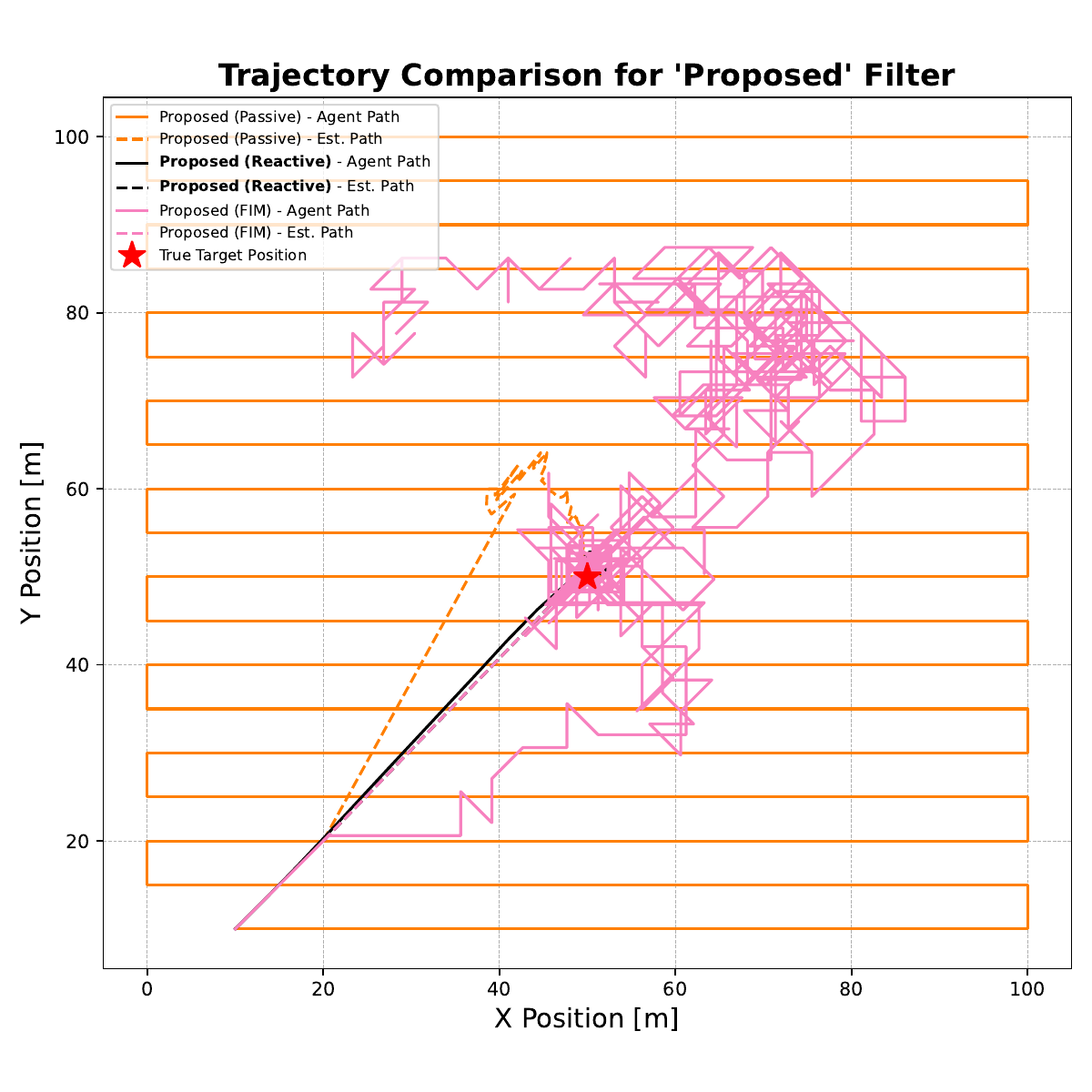}
    \caption{Empowered by our filter, different planners reveal distinct strategic personas. The \textbf{Passive} planner (orange) acts as a methodical but inefficient ``lawnmower.'' The \textbf{Reactive} planner (black) behaves like an aggressive ``hunter,'' taking a direct, efficient path to the target. In contrast, the \textbf{FIM} planner (pink) performs as a careful ``scientist,'' executing a complex orbit to meticulously gather information and minimize final uncertainty.}
    \label{fig:trajectory_comparison_new}
\end{figure}

We conduct our primary analysis in a canonical ``medium noise'' scenario ($\sigma_r = 1.5$~m) to strictly verify the theoretical mechanism. The comprehensive results are presented in Fig.~\ref{fig:rmse_semilogy}, Fig.~\ref{fig:bias_estimation_new}, and Fig.~\ref{fig:trajectory_comparison_new}, with a quantitative summary in Table~\ref{tab:grand_summary}.

\subsubsection{Mechanism Verification: SGD in Passive Mode}
First, we validate Proposition~\ref{prop:necessity} by examining the passive strategies in Fig.~\ref{fig:rmse_semilogy}. The standard \textbf{Huber-EKF (Passive)} suffers from significant stagnation (Final RMSE: 0.52~m). This empirically confirms that symmetric estimators, when denied geometric diversity, converge to a biased equilibrium due to the \textit{statistical-geometric degeneracy}.
In contrast, by respecting the non-negativity constraint, our \textbf{Proposed (Passive)} filter (Final RMSE: 0.35~m) and the \textbf{ADMM-EKF (Passive)} (Final RMSE: 0.37~m) significantly outperform the baseline. This demonstrates that correctly modeling the asymmetric physics is the first line of defense against NLOS, even without active planning.

\subsubsection{Breaking Degeneracy via Bilateral Information}
While better physical modeling helps, geometric conditioning remains the bottleneck. As shown in Table~\ref{tab:grand_summary}, \textbf{Proposed (Passive)} still requires 88 steps to trickle down to the 2.5~m threshold.
Confirming Proposition~\ref{prop:strong_convexity}, the introduction of active planners provides a dramatic speedup by generating \textit{bilateral information}. \textbf{Proposed (Reactive Crossing)} converges in only 20 steps, and \textbf{Proposed (FIM)} takes just 16 steps. This effectively validates that "crossing" maneuvers are the necessary geometric key to unlocking rapid observability in denied environments.

\subsubsection{Bias Learning and the Performance-Complexity Trade-off}
The most insightful findings emerge from linking the filter's internal state (Fig.~\ref{fig:bias_estimation_new}) to the planner's cost.

\textbf{Internal Mechanism:} Fig.~\ref{fig:bias_estimation_new} reveals the root cause of our filter's robustness. All methods using the \textbf{Proposed} filter correctly learn a higher \textit{effective} RTT bias (converging to approx. 5.5~m), absorbing the persistent NLOS component. Conversely, the standard \textbf{Huber-EKF} incorrectly attempts to converge to the true systematic bias of 1.5~m, failing to account for the environment-induced shift.

\textbf{Engineering Efficiency:} Ultimately, this leads to the distinct trade-off visible in Fig.~\ref{fig:rmse_semilogy}.
\begin{itemize}
    \item \textbf{Optimality:} The FIM-based methods achieve the highest final accuracy (RMSE $\approx 0.12$~m) but at a significant computational cost (approx. 0.28~ms/step).
    \item \textbf{Efficiency:} Our lightweight \textbf{Proposed (Reactive Crossing)} system, while yielding a slightly higher final RMSE (0.41~m), remains extremely competitive in the mission-critical initial convergence phase (20 steps vs. 16). Crucially, it operates at a trivial computational cost (approx. 0.01~ms), making it over \textbf{23 times more efficient}.
\end{itemize}
This highlights our Reactive Crossing framework as the pragmatic choice for resource-constrained aerial platforms, offering a verifiable "good-enough" solution at a fraction of the cost.

\subsection{Sensitivity Analysis Across Noise Regimes}

To demonstrate the generality of our mechanism, we extend the evaluation across three distinct noise regimes: Low ($\sigma_r=0.5$~m), Medium ($\sigma_r=1.5$~m), and High ($\sigma_r=2.5$~m). The comprehensive results are summarized in Table~\ref{tab:grand_summary}. These data reveal two fundamental insights regarding the stability of the estimator and the strategic selection of planners.

\begin{table*}[htbp]
    \centering
    \caption{Comprehensive Performance and Complexity Comparison Across Different Noise Scenarios}
    \label{tab:grand_summary}
    \renewcommand{\arraystretch}{1.2}
    \begin{tabular}{l c c c c} 
        \toprule
        \textbf{Noise Scenario} & \textbf{Filter-Planner Combination} & \textbf{Final RMSE} & \textbf{Steps to 2.5m} & \textbf{Avg. Cost} \\
        ($\sigma_r$) & & \textbf{[m]} & \textbf{(Time-to-$\varepsilon$)} & \textbf{[ms/step]} \\
        \midrule
        \textbf{Low (0.5 m)} 
        & Proposed (Reactive) & 0.21 & \textbf{5} & \textbf{\textasciitilde0.012} \\
        & Proposed (FIM) & 0.10 & 5 & \textasciitilde0.283 \\
        & SOTA Active (Huber+FIM) & \textbf{0.08} & 11 & \textasciitilde0.283 \\
        & ADMM-EKF (Passive) & 11.88 & >500 & \textasciitilde0.002 \\
        \midrule
        \textbf{Medium (1.5 m)} 
        & Proposed (Reactive) & 0.41 & 20 & \textbf{\textasciitilde0.012} \\
        & \textbf{Proposed (FIM)} & \textbf{0.12} & \textbf{16} & \textasciitilde0.281 \\
        & SOTA Active (Huber+FIM) & 0.13 & 19 & \textasciitilde0.281 \\
        & ADMM-EKF (Passive) & 0.37 & 58 & \textasciitilde0.002 \\
        \midrule
        \textbf{High (2.5 m)} 
        & Proposed (Reactive) & 0.70 & 24 & \textbf{\textasciitilde0.012} \\
        & \textbf{Proposed (FIM)} & \textbf{0.36} & 54 & \textasciitilde0.282 \\
        & SOTA Active (Huber+FIM) & 0.70 & 58 & \textasciitilde0.284 \\
        & ADMM-EKF (Passive) & 0.30 & 34 & \textasciitilde0.002 \\
        \bottomrule
    \end{tabular}
\end{table*}

The first conclusion concerns the intrinsic stability of the proposed filter. As shown in Table~\ref{tab:grand_summary}, the \textbf{AsymmetricHuberEKF} maintains consistent convergence across all noise levels. A striking contrast is observed in the \textbf{Low Noise} regime: while the optimization-based \textbf{ADMM-EKF} fails catastrophically (RMSE 11.88~m, Steps >500) due to the difficulty of splitting residuals from small biases, our proposed filter remains stable (RMSE 0.21~m). This empirically validates that our closed-form marginalization (Section~\ref{sec:estimation}) avoids the numerical instability and overfitting issues often plaguing iterative optimization methods in high-SNR regimes. Furthermore, in the \textbf{High Noise} regime, our filter paired with FIM significantly outperforms the standard Huber equivalent (0.36~m vs. 0.70~m), confirming that correct physical modeling becomes increasingly critical as measurement quality degrades.

The second insight is that the optimal planner choice represents a distinct engineering trade-off governed by the noise level:
\begin{itemize}
    \item \textbf{Low Noise (Efficiency Wins):} The lightweight \textbf{Proposed (Reactive)} is the optimal choice. It matches the convergence speed of the heavy FIM planner (5 steps) at a fraction of the cost ($\sim 0.01$ ms vs $\sim 0.28$ ms). Here, the heuristic is sufficient because the state estimate is clean enough to provide a reliable "crossing" direction.
    \item \textbf{High Noise (Robustness Wins):} The heuristic planner degrades (RMSE 0.70~m) because the instantaneous state estimate is too noisy to guide the UAV reliably. In this regime, the \textbf{Proposed (FIM)} planner becomes necessary. By optimizing the full covariance matrix, it filters out noise to plan robust trajectories, achieving a superior RMSE of 0.36~m.
    \item \textbf{Medium Noise (Transition):} This represents the crossover point where the trade-off between the heuristic's speed and the optimizer's precision is most balanced.
\end{itemize}

We emphasize that the primary contribution of this work is not to claim a single universally optimal algorithm, but to reveal the mechanism of \textbf{Statistical-Geometric Degeneracy (SGD)}. As the data shows, different filters (like ADMM) have specific operating zones. However, our results demonstrate that once \textbf{bilateral information} is actively manufactured, whether by a heuristic crossing or an FIM optimization, the degeneracy is systematically broken. This confirms that the coupling between asymmetric physics and sensing geometry is the fundamental bottleneck in post-disaster localization.

\subsection{Robustness in Structured NLOS Environments}
\label{sssec:obstacle_performance}

\begin{figure}[htbp]
    \centering
    \includegraphics[width=\columnwidth]{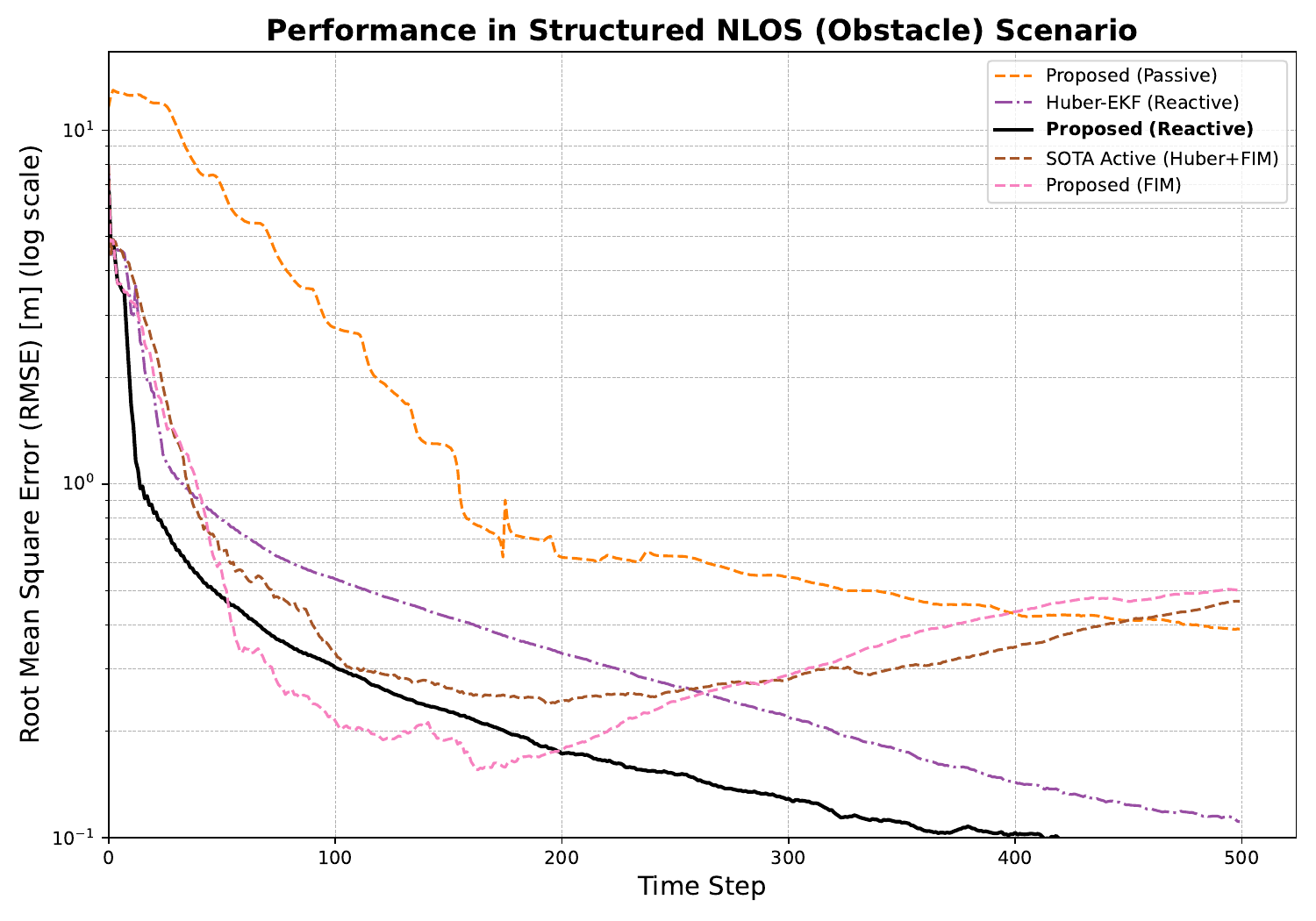}
    \caption{A dramatic reversal of planner effectiveness in the structured NLOS scenario. Unlike in the open-field case, our lightweight \textbf{Proposed (Reactive)} system now overwhelmingly outperforms the sophisticated FIM-based planners, which struggle and fail to converge. This highlights how environment structure can fundamentally alter optimal strategy.}
    \label{fig:obstacle_rmse}
\end{figure}

\begin{figure}[htbp]
    \centering
    \includegraphics[width=0.9\columnwidth]{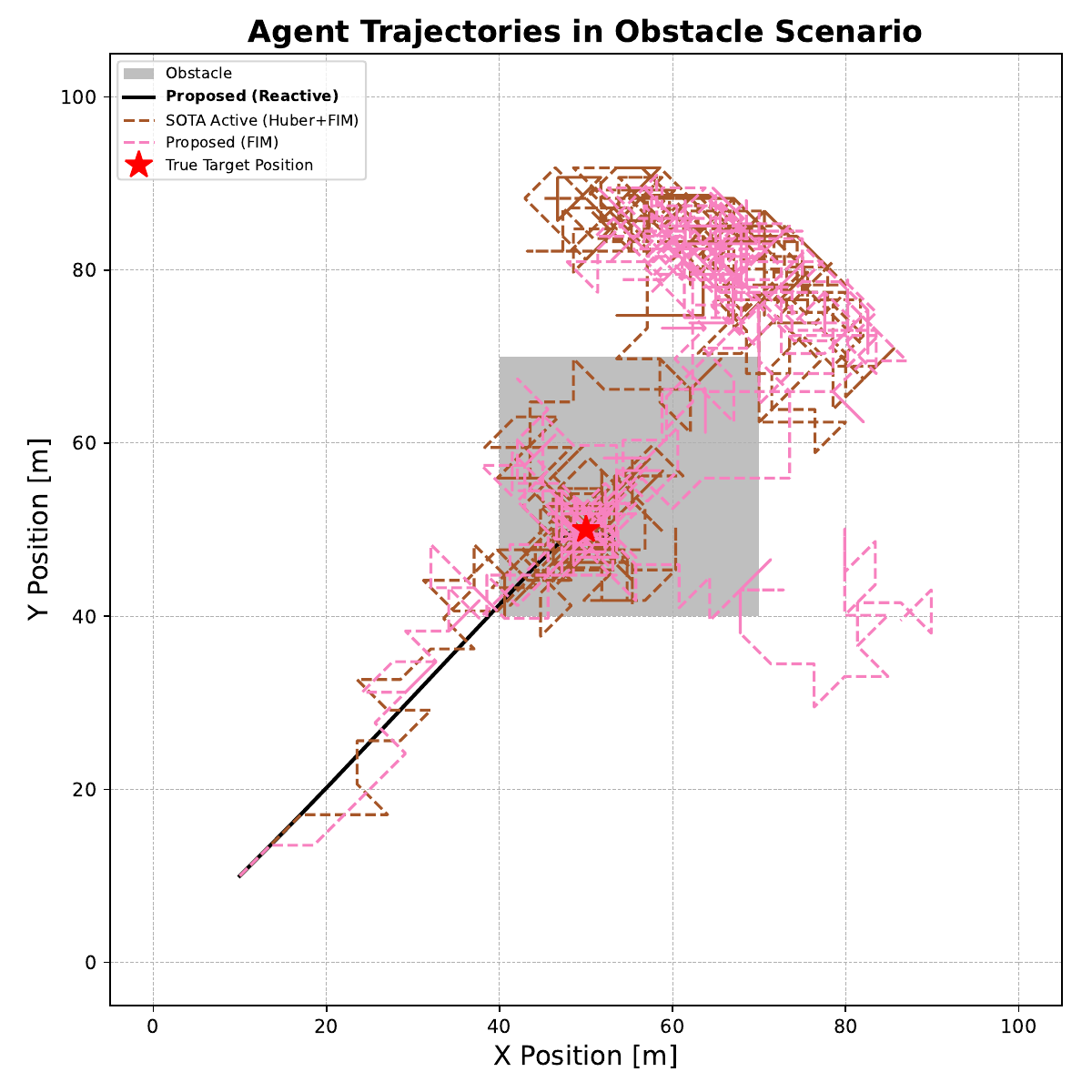}
    \caption{Agent trajectories in the obstacle scenario. The \textbf{Proposed (Reactive)} agent (black line) executes a direct and stable approach. In contrast, the FIM-based planners (pink and brown dashed lines) exhibit erratic behavior, becoming trapped in the obstacle's NLOS shadow.}
    \label{fig:obstacle_trajectories}
\end{figure}

To test the algorithms' generalization capabilities beyond stochastic error models, we evaluate them in a structured environment featuring a large central obstacle. This setup creates a predictable, spatially correlated region of severe NLOS. The quantitative and qualitative results are presented in Fig.~\ref{fig:obstacle_rmse} and Fig.~\ref{fig:obstacle_trajectories}, respectively.

The results in Fig.~\ref{fig:obstacle_rmse} reveal a notable reversal of planner effectiveness compared to the open-field scenarios. Our lightweight \textbf{Proposed (Reactive)} system not only converges fastest but also achieves the lowest final RMSE. In contrast, the optimization-based FIM planners, which excelled in the stochastic environment, struggle significantly here. Both \textbf{SOTA Active (Huber+FIM)} and \textbf{Proposed (FIM)} exhibit initial convergence but subsequently stagnate or diverge, ending with a high error floor.

Fig.~\ref{fig:obstacle_trajectories} illustrates the mechanism behind this failure. The FIM-based planners rely on maximizing \textit{local} information gain. However, within the extended "shadow" of the large obstacle, the gradient of the information metric becomes flat—all local candidate moves offer similarly poor geometry. Lacking a long-term global objective, the FIM planner becomes effectively trapped in a local minimum of information, exploring regions with no useful geometric diversity. This confirms that greedy, locally-optimal strategies are vulnerable to spatially structured blockages.

Conversely, the success of our \textbf{Proposed (Reactive)} system demonstrates a robust property arising from the interaction between the estimator and the planner. As visualized in Fig.~\ref{fig:obstacle_trajectories}, the simple goal-oriented logic interacts with the \textbf{AsymmetricHuberEKF} to generate an \textbf{emergent boundary tracking behavior}:
\begin{enumerate}
    \item \textbf{Bias Absorption:} As the UAV approaches the obstacle, it encounters persistent large, positive NLOS errors.
    \item \textbf{Estimate Shift:} Our robust filter correctly interprets these as non-negative biases. In attempting to reconcile the measurements under the one-sided constraint, it naturally pushes the target estimate $\hat{x}_t$ towards the edge of the obstacle's shadow (the direction of the line-of-sight boundary).
    \item \textbf{Trajectory Correction:} The heuristic planner, blindly following this adjusted estimate, automatically steers the UAV towards the obstacle's edge.
\end{enumerate}

This feedback loop results in the UAV executing a \textbf{stable approach vector} that hugs the Line-of-Sight boundary without overshooting. This allows the system to continuously harvest high-quality "bilateral information" from the periphery of the NLOS zone. This result highlights a critical engineering insight: in structured environments, a simple goal-oriented planner coupled with a physically faithful estimator can outperform complex local optimizers by leveraging emergent system dynamics.

\subsection{Robustness and Sensitivity Analysis}

Finally, to validate the deployment readiness of our framework, we conducted a comprehensive sensitivity analysis against both environmental variables and algorithmic hyperparameters. The results demonstrate the stability of the core filter mechanism and reveal the operational boundaries of the planning strategies.

\subsubsection{Robustness to Environmental Severity}
We first evaluate system resilience as the environment becomes increasingly hostile, varying the NLOS probability ($p_{\text{NLOS}}$) and the mean bias magnitude ($\mu_{\text{NLOS}}$). The results are detailed in Table~\ref{tab:sensitivity_pnlos} and Table~\ref{tab:sensitivity_nlos_bias}.

Two critical trends emerge from these data. First, our \textbf{Proposed (Passive)} filter demonstrates superior resilience compared to the standard baseline. As shown in Table~\ref{tab:sensitivity_nlos_bias}, when the mean bias scales up to extreme levels (15~m), the error of the standard \textbf{Huber-EKF} spikes to 0.809~m. In contrast, our filter's error grows gracefully to only 0.411~m, performing on par with the computationally heavier \textbf{ADMM-EKF} (0.425~m). This confirms that explicitly enforcing physical constraints provides a safety net that symmetric filters lack.

Second, the data reveals the "breaking point" of the heuristic planner. While \textbf{Proposed (Reactive)} generally provides excellent acceleration, its performance degrades in extreme scenarios (e.g., at $p_{\text{NLOS}}=0.5$ in Table~\ref{tab:sensitivity_pnlos}, RMSE spikes to 1.219~m). This indicates that when the signal quality is extremely poor, the instantaneous gradient used by the heuristic becomes unreliable, necessitating the use of the more robust (but costly) FIM planner.

\begin{table}[htbp]
    \centering
    \caption{Sensitivity to NLOS Probability ($p_{\text{NLOS}}$)}
    \label{tab:sensitivity_pnlos}
    \renewcommand{\arraystretch}{1.2}
    \begin{tabular}{ccccc} 
        \toprule
        \textbf{Value} & \textbf{ADMM (P)} & \textbf{Huber (P)} & \textbf{Proposed (P)} & \textbf{Proposed (R)} \\
        \midrule
        0.1 & 0.193 & 0.239 & 0.217 & 0.119 \\
        0.3 & 0.235 & 0.303 & 0.245 & 0.138 \\
        0.5 & 0.273 & 0.431 & 0.299 & 1.219 \\
        0.7 & 0.331 & 0.544 & 0.354 & 0.261 \\
        0.9 & 0.354 & 0.608 & 0.398 & 0.501 \\
        \bottomrule
    \end{tabular}
\end{table}

\begin{table}[htbp]
    \centering
    \caption{Sensitivity to NLOS Mean Bias ($\mu_{\text{NLOS}}$)}
    \label{tab:sensitivity_nlos_bias}
    \renewcommand{\arraystretch}{1.2}
    \begin{tabular}{ccccc} 
        \toprule
        \textbf{Value} & \textbf{ADMM (P)} & \textbf{Huber (P)} & \textbf{Proposed (P)} & \textbf{Proposed (R)} \\
        \midrule
        2.0  & 0.236 & 0.312 & 0.278 & 0.132 \\
        5.0  & 0.287 & 0.420 & 0.323 & 0.178 \\
        8.0  & 0.331 & 0.544 & 0.354 & 0.261 \\
        12.0 & 0.382 & 0.706 & 0.391 & 0.865 \\
        15.0 & 0.425 & 0.809 & 0.411 & 0.991 \\
        \bottomrule
    \end{tabular}
\end{table}

\subsubsection{Sensitivity to Algorithmic Hyperparameters}
We further analyze the system's sensitivity to internal parameters: the planner's step size ($\eta$) and the filter's tuning constant ($k_{\text{rtt}}$).

\textbf{Planner Dynamics:} Table~\ref{tab:sensitivity_step} reveals a convex "U-shaped" performance curve for the step size. The heuristic \textbf{Proposed (Reactive)} is particularly sensitive: at $\eta=7.0$, its error jumps to 1.151~m. This suggests that overly aggressive steps can violate the local linearity assumption of the EKF, leading to overshoot. In contrast, the optimization-based \textbf{Proposed (FIM)} maintains stability (0.099~m) even at larger steps, showcasing its superior handling of non-linear dynamics.

\textbf{Filter Robustness:} Most importantly, Table~\ref{tab:sensitivity_krtt} provides strong evidence for the practicality of our method. The performance of \textbf{Proposed (Passive)} remains remarkably stable (RMSE $\approx 0.4$~m) across a broad range of $k_{\text{rtt}}$ values (from 0.5 to 4.0). This implies that our \textbf{AsymmetricHuberEKF} is robust to model mismatch—a critical feature for disaster scenarios where the exact NLOS statistics ($\lambda$) cannot be known a priori.

\begin{table}[htbp]
    \centering
    \caption{Sensitivity to Planner Step Size ($\eta$)}
    \label{tab:sensitivity_step}
    \renewcommand{\arraystretch}{1.2}
    \begin{tabular}{ccc} 
        \toprule
        \textbf{Value} & \textbf{Proposed (Reactive)} & \textbf{Proposed (FIM)} \\
        \midrule
        3.0 & 2.394 & 0.519 \\
        4.0 & 0.981 & 0.318 \\
        5.0 & 0.501 & 0.170 \\
        6.0 & 0.240 & 0.090 \\
        7.0 & 1.151 & 0.099 \\
        \bottomrule
    \end{tabular}
\end{table}

\begin{table}[htbp]
    \centering
    \caption{Sensitivity to Filter Prior ($k_{\text{rtt}}$)}
    \label{tab:sensitivity_krtt}
    \renewcommand{\arraystretch}{1.2}
    \begin{tabular}{ccc} 
        \toprule
        \textbf{Value} & \textbf{Proposed (Passive)} & \textbf{Proposed (Reactive)} \\
        \midrule
        0.5 & 0.441 & 0.921 \\
        1.0 & 0.402 & 0.242 \\
        1.5 & 0.398 & 0.501 \\
        2.5 & 0.421 & 0.302 \\
        4.0 & 0.473 & 0.389 \\
        \bottomrule
    \end{tabular}
\end{table}

\section{Limitations and Scope}
\label{sec:limitations}

While our results demonstrate the theoretical necessity of bilateral information and the robustness of the asymmetric filter, we explicitly acknowledge the boundaries of this study. These boundaries are drawn to isolate the fundamental mechanism of \textit{Statistical-Geometric Degeneracy} (SGD) from system-level implementation details.

\paragraph{2D Abstraction and Geometric Generalization}
All derivations and simulations are conducted in a 2D planar environment. This abstraction is intentional as it allows the statistical-geometric structure to be examined in a setting where the underlying mechanisms are not obscured by the complexity of 3D kinematics. Crucially, the behavior of the asymmetric Huber loss concerns the \textit{local structure} of the likelihood function and is therefore invariant to dimensionality.
In a 3D context, the concept of a "crossing" maneuver generalizes naturally. It corresponds to traversing the \textit{separation plane} defined by the agent, the estimated target position, and the gravity vector. This ensures the acquisition of geometrically diverse bilateral information in both azimuth and elevation. Extending the current framework to 3D requires addressing computational scaling but preserves the fundamental mechanism validated here.

\paragraph{Decoupling Perception from Dynamics}
This study does not account for UAV flight dynamics, actuation limits, or collision avoidance constraints. This represents a modeling choice to strictly decouple \textit{perception} (information generation) from \textit{control} (trajectory execution). The "Reactive Crossing" planner proposed herein serves as a high-level \textit{guidance law} rather than a low-level control policy. Whether these guidance commands can be perfectly tracked by a specific airframe under wind disturbance is a separate control engineering question that lies outside the scope of this mechanism-level analysis.

\paragraph{Assumption of Perfect Self-Localization}
We assume the UAV's own pose is perfectly known. This assumption is not required for the qualitative conclusions regarding target observability. SGD arises from the \textit{relative} target-sensor geometry and the asymmetric error model, independent of the agent's global localization accuracy. Introducing pose uncertainty (SLAM) would affect the convergence rate but does not alter the necessary condition for breaking the geometric degeneracy studied here. Integrating this robust estimation principle into a full SLAM framework remains a promising direction for future work.

\paragraph{Position in the Solution Stack}
We emphasize that this paper provides a \textit{mechanism-level} foundation. The findings are intended as a conceptual and algorithmic kernel for future Connected and Autonomous Vehicles (CAV), not as a turnkey SAR-grade product. Higher-level components, such as multi-agent coordination and hardware-specific signal processing, represent additional layers of complexity. Our contribution is to formalize the estimator-geometry interaction that must lie at the core of any such system to ensure resilience in denied environments.

\section{Conclusion}
\label{sec:conclusion}

This paper addressed the fundamental challenge of robust mobile localization in post-disaster environments, where the breakdown of standard estimators is often misattributed to noise rather than structural deficiency. By rigorously analyzing the interaction between channel physics and sensing geometry, we formalized the phenomenon of \textit{Statistical-Geometric Degeneracy} (SGD)—a state where asymmetric physical biases coupled with unilateral sensing geometries render the estimation problem ill-posed.

To resolve this, we proposed a unified theoretical framework. First, we derived the \textbf{AsymmetricHuberEKF} via a Bayesian MAP formulation, proving that standard robust filters are merely "relaxed" theoretical cases that ignore the non-negative nature of physical signal propagation. Second, we established that breaking SGD requires more than just robust filtering; it necessitates the active manufacturing of \textbf{bilateral information}. We theoretically identified the "crossing" maneuver not as a heuristic trick, but as a verifiable necessary condition to recover the convexity of the optimization landscape.

Extensive simulations across varying noise regimes validated these mechanisms. The results demonstrated that our closed-form, one-sided robust filter offers superior numerical stability compared to iterative optimization methods (e.g., ADMM), particularly in low-noise regimes where residual splitting becomes unstable. Furthermore, we revealed a distinct engineering trade-off: while optimization-based planners (FIM) offer precision in high-noise limits, our lightweight heuristic planner achieves comparable convergence speeds in canonical scenarios at a fraction of the computational cost ($\sim 23\times$ efficiency gain).

Ultimately, this work bridges the gap between physical channel modeling and autonomous navigation. By compiling physical constraints directly into the estimator's cost function and coupling them with geometrically aware motion policies, we provide a foundational algorithmic kernel for the resilient operation of future Connected and Autonomous Vehicles (CAV) and Integrated Sensing and Communication (ISAC) systems in denied environments.

\bibliographystyle{IEEEtran}
\bibliography{references}

\appendices

\section{Analytical Marginalization and Properties of the ``One-sided Huber'' Function}
\label{app:onesided_huber}

This section provides the detailed derivation of the one-sided Huber loss function, which forms the basis of our robust estimator.

\begin{proposition}[Analytical Solution for NLOS Bias]
\label{prop:kkt_solution}
For any given residual $r \in \mathbb{R}$ and constants $\sigma > 0, \lambda > 0$, the optimization subproblem for the RTT NLOS bias $b_{m,t}$,
\begin{equation*}
\min_{b\ge 0} \frac{(r-b)^2}{2\sigma^2} + \lambda b
\end{equation*}
has a unique, closed-form solution given by
\begin{equation*}
b^\star = \max\left(0, r - \lambda\sigma^2\right).
\end{equation*}
\end{proposition}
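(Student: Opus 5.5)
The objective $f(b) = \frac{(r-b)^2}{2\sigma^2} + \lambda b$ is a strictly convex quadratic in $b$, since $f''(b) = 1/\sigma^2 > 0$. The feasible set $[0,\infty)$ is closed and convex. The plan is therefore to establish existence and uniqueness first, and then identify the minimizer through the KKT conditions.

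\textbf{Existence and uniqueness.} Because $f(b)\to\infty$ as $b\to\infty$, $f$ is coercive on $[0,\infty)$, so a minimizer exists (Weierstrass on a sublevel set). Strict convexity then rules out two distinct minimizers: if two existed, their midpoint would be feasible and would give a strictly smaller value.

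\textbf{KKT characterization.} The constraint is linear, so Slater's condition holds trivially and the KKT conditions are necessary and sufficient. Introduce a multiplier $\mu \ge 0$ for the constraint $-b\le 0$. The conditions are:
\begin{itemize}
\item stationarity: $-(r-b)/\sigma^2 + \lambda - \mu = 0$;
\item primal feasibility: $b \ge 0$;
\item dual feasibility: $\mu\ge0$;
\item complementary slackness: $\mu b = 0$.
\end{itemize}
Split into two cases according to the sign of $r - \lambda\sigma^2$.
\begin{itemize}
\item If $r > \lambda\sigma^2$, take $\mu=0$. Stationarity gives $b = r-\lambda\sigma^2 > 0$, which is feasible, and complementary slackness holds.
\item If $r \le \lambda\sigma^2$, take $b=0$. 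Stationarity gives $\mu = \lambda - r/\sigma^2 \ge 0$, so dual feasibility holds, and $\mu b = 0$.
\end{itemize}
In both cases we obtain a KKT point equal to $\max(0, r-\lambda\sigma^2)$. By sufficiency of KKT under convexity, together with uniqueness, this point is the solution.

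\textbf{Elementary alternative.} To avoid invoking KKT machinery, I would present the same argument directly. The unconstrained minimizer of the parabola is $\bar b = r - \lambda\sigma^2$. If $\bar b \ge 0$, it is feasible and hence optimal. Otherwise $f'(b) = (b - \bar b)/\sigma^2 > 0$ on $[0,\infty)$, so $f$ is increasing there and the minimum is at $b=0$.

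There is no real obstacle here. The only step needing care is the boundary case $r = \lambda\sigma^2$: both branches give $b^\star=0$, so the two formulas agree and the $\max$ expression is consistent.
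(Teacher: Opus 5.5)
Your proposal is correct and takes essentially the same route as the paper: strict convexity, KKT conditions as necessary and sufficient, the same two-case split on complementary slackness, and uniqueness from strict convexity. The coercivity-based existence step and the elementary monotonicity argument via $f'(b) = (b-\bar b)/\sigma^2$ are valid additions; the paper does not need them.
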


\begin{IEEEproof}
Let the objective be $f(b)=\frac{1}{2\sigma^2}(r-b)^2+\lambda b$. The function is strictly convex, so the Karush-Kuhn-Tucker (KKT) conditions are necessary and sufficient. The gradient is $f'(b)=\frac{1}{\sigma^2}(b-r)+\lambda$. We consider two cases based on the complementary slackness condition ($\mu b = 0$):
\begin{itemize}
    \item \textbf{Case 1 ($b > 0$):} The optimum is in the interior, which implies the Lagrange multiplier $\mu=0$. The stationarity condition $f'(b)=0$ yields $b=r-\lambda\sigma^2$. This solution is valid only if $b>0$, which requires $r > \lambda\sigma^2$.
    \item \textbf{Case 2 ($b = 0$):} The optimum is on the boundary. The stationarity condition $f'(0) - \mu = 0$ implies $\mu = \lambda - r/\sigma^2$. For this to be valid, we require $\mu \ge 0$, which simplifies to $r \le \lambda\sigma^2$.
\end{itemize}
Combining both cases yields the stated closed-form solution. Uniqueness is guaranteed by the strict convexity of $f(b)$.
\end{IEEEproof}

\begin{proposition}[Marginalized Cost Equivalence]
\label{prop:marginalized_cost}
Substituting the optimal $b^\star$ from Proposition~\ref{prop:kkt_solution} back into the subproblem yields the one-sided Huber cost function $\rho_{\text{rtt}}(r)$ as defined in the main text:
$$
\rho_{\text{rtt}}(r) = 
\begin{cases} 
  \dfrac{r^2}{2\sigma^2}, & r \le \tau \\[6pt]
  \lambda r - \dfrac{1}{2}\lambda^2\sigma^2, & r > \tau
\end{cases}
$$
where the threshold is $\tau = \lambda\sigma^2$.
\end{proposition}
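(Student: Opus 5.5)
The plan is to substitute the minimizer $b^\star=\max(0,r-\lambda\sigma^2)$ from Proposition~\ref{prop:kkt_solution} directly into $f(b)=\frac{(r-b)^2}{2\sigma^2}+\lambda b$. Since $\rho_{\text{rtt}}(r)$ is defined as the minimum value $\min_{b\ge0} f(b)=f(b^\star)$, the whole argument reduces to evaluating $f$ at $b^\star$. Because $b^\star$ has two branches, I will split into the same two cases as in that proposition, using the threshold $\tau=\lambda\sigma^2$.

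\textbf{Case $r\le\tau$.} Here $b^\star=0$, so
\[
f(0)=\frac{r^2}{2\sigma^2},
\]
which is the quadratic branch.

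\textbf{Case $r>\tau$.} Here $b^\star=r-\lambda\sigma^2$, so the residual after removing the bias is $r-b^\star=\lambda\sigma^2$. The data term becomes
\[
\frac{(\lambda\sigma^2)^2}{2\sigma^2}=\tfrac12\lambda^2\sigma^2,
\]
and the prior term becomes
\[
\lambda b^\star=\lambda r-\lambda^2\sigma^2.
\]
Adding the two gives $\lambda r-\tfrac12\lambda^2\sigma^2$, which is the linear branch.

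Finally, I will note that both branches equal $\tfrac12\lambda^2\sigma^2$ at $r=\tau$. This confirms that assigning the boundary point $r=\tau$ to the first case is harmless and that the piecewise definition is consistent.

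I expect no real obstacle here. The only point needing care is interpretive: "substituting back" means taking the minimum over $b$ (profiling out $b$), not marginalizing by integration. Under this reading the equivalence is exact, and I would state this explicitly.
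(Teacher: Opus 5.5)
Your proposal is correct and matches the paper's proof: both substitute $b^\star=\max(0,r-\lambda\sigma^2)$ into the subproblem objective, split on $r\le\tau$ versus $r>\tau$, and simplify the second branch to $\lambda r-\tfrac12\lambda^2\sigma^2$. Two of your points are not in this proof. The paper checks that the branches agree at $r=\tau$ in Proposition~\ref{prop:huber_properties} instead. Your remark that ``marginalization'' here means minimizing over $b$ (profiling) rather than integrating is accurate and does not appear in the paper.
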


\begin{IEEEproof}
Let $\tau = \lambda\sigma^2$.
If $r \le \tau$, then $b^\star = 0$, and the cost becomes $\frac{(r-0)^2}{2\sigma^2} = \frac{r^2}{2\sigma^2}$.
If $r > \tau$, then $b^\star = r - \tau$. Substituting this into the cost function gives:
\begin{align*}
\frac{(r-(r-\tau))^2}{2\sigma^2}+\lambda(r-\tau) &= \frac{\tau^2}{2\sigma^2}+\lambda r - \lambda\tau \\
&= \frac{\tau^2}{2\sigma^2} + \frac{\tau}{\sigma^2} r - \frac{\tau^2}{\sigma^2} \\
&= \lambda r - \frac{1}{2}\lambda^2\sigma^2. \qedhere
\end{align*}
\end{IEEEproof}

\begin{proposition}[Properties of the One-sided Huber Loss]
\label{prop:huber_properties}
The marginalized cost function $\rho_{\text{rtt}}(r)$ is convex and continuously differentiable ($C^1$) everywhere. Its first and second derivatives (where defined) are:
\begin{align*}
\rho_{\text{rtt}}'(r) &= 
\begin{cases} 
  r/\sigma^2, & r \le \tau \\
  \tau/\sigma^2, & r > \tau
\end{cases} \\
\rho_{\text{rtt}}''(r) &= 
\begin{cases} 
  1/\sigma^2, & r < \tau \\
  0, & r > \tau
\end{cases}
\end{align*}
\end{proposition}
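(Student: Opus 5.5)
I will argue piecewise from the closed form in Proposition~\ref{prop:marginalized_cost}, with $\tau=\lambda\sigma^2$. On the open half-lines $r<\tau$ and $r>\tau$ the function is a polynomial, so it is smooth there. Direct differentiation gives $\rho_{\text{rtt}}'(r)=r/\sigma^2$ and $\rho_{\text{rtt}}''(r)=1/\sigma^2$ on $r<\tau$. On $r>\tau$ it gives $\rho_{\text{rtt}}'(r)=\lambda=\tau/\sigma^2$ and $\rho_{\text{rtt}}''(r)=0$. This already yields the stated derivative formulas away from the junction.

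The only delicate point is the junction $r=\tau$, which I expect to be the main (though mild) obstacle. I will check three things there.

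\textbf{Continuity of the value.} The left limit is $\tau^2/(2\sigma^2)=\tfrac12\lambda^2\sigma^2$. The right limit is $\lambda\tau-\tfrac12\lambda^2\sigma^2=\tfrac12\lambda^2\sigma^2$, so the two agree.

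\textbf{Existence of $\rho_{\text{rtt}}'(\tau)$.} The one-sided derivatives are $\tau/\sigma^2$ from the left and $\lambda$ from the right, and these coincide because $\tau/\sigma^2=\lambda$. So $\rho_{\text{rtt}}'(\tau)=\tau/\sigma^2$, matching the $r\le\tau$ branch of the stated formula.

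\textbf{Continuity of $\rho_{\text{rtt}}'$.} The derivative equals $\min(r,\tau)/\sigma^2$, which is a continuous function of $r$. This gives $C^1$ everywhere. The second derivative genuinely jumps from $1/\sigma^2$ to $0$ at $\tau$, which is why it is stated only ``where defined''.

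For convexity, I will use the fact that a differentiable function on $\mathbb{R}$ is convex if and only if its derivative is nondecreasing. The derivative $\rho_{\text{rtt}}'(r)=\min(r,\tau)/\sigma^2$ is nondecreasing, being a minimum of nondecreasing functions.

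As a cross-check I would also note a structural argument. The function $\rho_{\text{rtt}}(r)=\min_{b\ge0}\big[(r-b)^2/(2\sigma^2)+\lambda b\big]$ is the partial minimization over a convex set of a function jointly convex in $(r,b)$. Such a partial minimum is convex in $r$, which confirms convexity independently of the closed form.
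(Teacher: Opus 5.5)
Your proof is correct and follows essentially the same route as the paper. Both differentiate piecewise, match the value and first derivative at the junction $r=\tau$, and then deduce convexity; your nondecreasing-derivative argument via $\min(r,\tau)/\sigma^2$ (plus the partial-minimization cross-check) just makes the convexity step a bit tighter than the paper's appeal to a nonnegative second derivative ``where it exists''.
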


\begin{IEEEproof}
Both pieces of the function are convex. We verify continuity of the function and its first derivative at the junction point $r=\tau$.
Function continuity: $\lim_{r\to\tau^-} \rho(r) = \frac{\tau^2}{2\sigma^2}$ and $\lim_{r\to\tau^+} \rho(r) = \lambda\tau - \frac{1}{2}\lambda^2\sigma^2 = \frac{\tau^2}{\sigma^2} - \frac{\tau^2}{2\sigma^2} = \frac{\tau^2}{2\sigma^2}$. The function is continuous.
First derivative continuity: $\lim_{r\to\tau^-} \rho'(r) = \tau/\sigma^2$ and $\lim_{r\to\tau^+} \rho'(r) = \tau/\sigma^2$. The first derivative is continuous, so the function is $C^1$.
Since the second derivative is non-negative where it exists, the function is convex on $\mathbb{R}$.
\end{IEEEproof}

\begin{proposition}[Equivalence of Implementation Parameters]
\label{prop:param_equivalence}
To align the physically-derived parameter $\lambda$ with the statistically-motivated implementation parameter $k$ (where the threshold is defined as $\tau = k\sigma$), the following equivalence holds:
$$ \tau = \lambda\sigma^2 = k\sigma \implies k = \lambda\sigma $$
\end{proposition}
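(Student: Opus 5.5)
The argument is essentially algebraic. It reduces to comparing two parameterizations of the same breakpoint of the one-sided Huber loss.

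First, I will take from Proposition~\ref{prop:marginalized_cost} that the physically derived loss $\rho_{\text{rtt}}$ switches from the quadratic branch $r^2/(2\sigma^2)$ to the linear branch exactly at $\tau=\lambda\sigma^2$. By Proposition~\ref{prop:huber_properties}, this is also the unique point where $\rho''_{\text{rtt}}$ jumps from $1/\sigma^2$ to $0$, and where the influence function $\rho'_{\text{rtt}}$ saturates at $\tau/\sigma^2$. This makes the threshold an intrinsic, parameterization-independent feature of the loss.

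Second, I will write the standard implementation form. The residual is normalized as $u=r/\sigma$, and the quadratic/linear switch is placed at $u=k$, i.e., at $r=k\sigma$.

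Requiring the two losses to share this breakpoint gives $\lambda\sigma^2=k\sigma$. Since $\sigma>0$, dividing by $\sigma$ yields $k=\lambda\sigma$. Conversely, if $k=\lambda\sigma$, the thresholds coincide, so the map $\lambda\mapsto k$ is a bijection on $(0,\infty)$ for fixed $\sigma$.

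To show that matching thresholds is more than a convention, I will also check the positive branch. The implementation loss, rescaled to the same quadratic curvature $1/\sigma^2$, has the form $\frac{k}{\sigma}r-\frac{1}{2}k^2$ for $r>k\sigma$. Substituting $k=\lambda\sigma$ gives $\lambda r-\frac{1}{2}\lambda^2\sigma^2$, which is exactly the linear branch in Proposition~\ref{prop:marginalized_cost}.

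The quadratic branches are identical by construction. Hence the two losses coincide on all of $\mathbb{R}$, so the optimization problems~\eqref{eq:final_objective} and the corresponding EKF updates agree.

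The main (mild) obstacle is fixing the normalization of the implementation loss. Robust libraries often define Huber on $u=r/\sigma$ with curvature $1$, which differs from $\rho_{\text{rtt}}$ by a constant factor and shift. I will handle this by noting that positive scaling and additive constants do not change minimizers. The breakpoint itself is scale-invariant, so the relation $k=\lambda\sigma$ holds regardless of the normalization chosen.
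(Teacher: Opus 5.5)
Your proposal is correct and follows the paper's proof, which simply equates the two threshold definitions $\lambda\sigma^2 = k\sigma$ and divides by $\sigma>0$; your additional check that the linear branches coincide under $k=\lambda\sigma$ is a nice strengthening that the paper omits. One small caution: rescaling only the RTT terms would change the minimizer of the combined objective \eqref{eq:final_objective}, since it reweights them against the AoA and $\delta_m$ prior terms, so your normalization remark should be limited to the breakpoint location, which is indeed scale-invariant.
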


\begin{IEEEproof}
The relationship is established by directly equating the two definitions for the threshold $\tau$.
\end{IEEEproof}

\section{ERCM, Strong Convexity, and the Necessity of Bilateral Information}
\label{app:ercm_proofs}

Let the robust objective function be $\mathcal{L}(x) = \sum_t \rho_\oplus(r_t(x))$, where $\oplus \in \{\text{rtt}, \text{aoa}\}$, $r_t = y_t - h(x; s_t) - \delta_t$, and the Jacobian is $H_t := -\nabla_x h(x; s_t)$.

\begin{lemma}[Form of the Expected Robust Curvature Matrix (ERCM)]
\label{lem:ercm_form}
Under the Gauss-Newton approximation, the local curvature of the objective function $\mathcal{L}(x)$ is described by the Expected Robust Curvature Matrix (ERCM):
\begin{equation*}
    \mathcal{C}(x) \approx \sum_t \rho_\oplus''\big(r_t(x)\big) H_t H_t^\top
\end{equation*}
For the one-sided Huber loss, the second derivative weight $\rho_{\text{rtt}}''(r_t)$ is $1/\sigma^2$ only when the residual is non-saturated ($r_t \le \tau_t$), and zero otherwise.
\end{lemma}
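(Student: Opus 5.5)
We compute the Hessian of $\mathcal{L}(x)=\sum_t \rho_\oplus(r_t(x))$ directly by the chain rule, then drop the terms that Gauss--Newton discards. Since $r_t(x)=y_t-h(x;s_t)-\delta_t$ and $H_t=-\nabla_x h(x;s_t)$, we have $\nabla_x r_t = H_t$ and $\nabla_x^2 r_t = -\nabla_x^2 h(x;s_t)$. Differentiating $\rho_\oplus(r_t(x))$ once gives $\rho_\oplus'(r_t)H_t$. Differentiating again gives
\begin{equation*}
\nabla_x^2 \mathcal{L}(x)=\sum_t \rho_\oplus''\big(r_t(x)\big) H_t H_t^\top-\sum_t \rho_\oplus'\big(r_t(x)\big)\nabla_x^2 h(x;s_t).
\end{equation*}
This holds wherever $\rho_\oplus''$ exists, i.e.\ for $r_t\neq\tau_t$. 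By Proposition~\ref{prop:huber_properties}, $\rho_{\text{rtt}}$ is $C^1$, so the first line is legitimate everywhere. The second derivative exists except on the null set $\{r_t=\tau_t\}$, where we assign the left-hand value $1/\sigma^2$, consistent with the convention $r_t\le\tau_t$ in the statement.

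The Gauss--Newton approximation neglects the second sum, which contains the curvature of the measurement model $h$. We justify this in the standard way. For RTT, $\nabla_x^2 h$ for the range $\|x-s_t\|$ has norm $1/\|x-s_t\|$, which is small relative to the curvature scale when the range is large compared with the residual. In addition, the weight $\rho'_\oplus$ is bounded by $\tau/\sigma^2$ (resp.\ $k/\sigma$ for AoA). The second sum is therefore of lower order near a good fit, and what remains is $\mathcal{C}(x)\approx\sum_t\rho_\oplus''(r_t)H_tH_t^\top$. The word ``Expected'' in the name can be accommodated by taking expectation over the noise, which preserves this form with $\rho''$ replaced by its mean. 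Since the lemma only states the approximate form, we will present the pointwise version and remark that the expectation is analogous.

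Finally, we substitute the piecewise second derivative from Proposition~\ref{prop:huber_properties}: $\rho_{\text{rtt}}''(r)=1/\sigma^2$ for $r<\tau$ and $0$ for $r>\tau$. This yields the weight $1/\sigma^2$ exactly on the non-saturated set $r_t\le\tau_t$ and zero otherwise. The RTT block therefore becomes $\sigma_r^{-2}\sum_{t\in\mathcal{T}_{\text{lin}}}H_tH_t^\top$, as quoted in the main text.

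The main obstacle is not the algebra but rigor at the kink $r=\tau$ and the control of the neglected term. We plan to handle the kink via the generalized (Clarke) second derivative, noting that the set has measure zero. We will state the neglected term explicitly as the Gauss--Newton approximation error rather than attempt a sharp bound.
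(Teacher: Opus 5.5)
Your proposal follows the paper's proof. You apply the chain rule to get $\rho''(r_t)H_tH_t^\top+\rho'(r_t)\nabla^2 r_t$, drop the second term under the Gauss--Newton approximation, sum over $t$, and read off $\rho''_{\text{rtt}}$ from Proposition~\ref{prop:huber_properties}. One small slip in your optional justification of the dropped term: for the one-sided loss, $\rho'_{\text{rtt}}(r)=r/\sigma^2$ on $r\le\tau$ is bounded above by $\tau/\sigma^2$ but not in absolute value, because large negative residuals are still penalized quadratically. So that bound does not control $\rho'_{\text{rtt}}\nabla^2 h$, although this is harmless here since the lemma takes the Gauss--Newton approximation as a hypothesis.
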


\begin{IEEEproof}
The second derivative of a composite function $\rho(r(x))$ is given by the chain rule: $\nabla^2 \rho(r(x)) = \rho''(r) (\nabla r)(\nabla r)^\top + \rho'(r) \nabla^2 r$. The Gauss-Newton approximation neglects the second term, which involves the second derivative of $h(x;s_t)$. Since $\nabla r_t = H_t$, we have $\nabla^2 \rho(r_t) \approx \rho''(r_t) H_t H_t^\top$. Summing over all samples $t$ yields the ERCM. The value of $\rho_{\text{rtt}}''$ is given by Proposition~\ref{prop:huber_properties}.
\end{IEEEproof}

\begin{proposition}[Bilateral Information and Restricted Strong Convexity]
\label{prop:strong_convexity}
Let $\mathcal{N}$ be a neighborhood of an estimate. If there exists a subset of samples $\mathcal{I}$ such that (i) for all $t \in \mathcal{I}$, the residuals are non-saturated ($r_t(x) \le \tau_t$), and (ii) their Jacobians are sufficiently diverse to span the space ($\sum_{t \in \mathcal{I}} H_t H_t^\top \succ 0$), then the objective function $\mathcal{L}(x)$ satisfies restricted strong convexity in $\mathcal{N}$. That is, there exists a constant $\mu > 0$ such that
$$
\nabla^2 \mathcal{L}(x) \succeq \mu I, \quad \forall x \in \mathcal{N}.
$$
\end{proposition}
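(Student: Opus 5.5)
The plan is to work within the Gauss--Newton curvature model of Lemma~\ref{lem:ercm_form} and show that the non-saturated subset $\mathcal{I}$ alone already supplies a uniform positive-definite lower bound, while every other sample contributes something positive semidefinite. Write
\begin{equation*}
\nabla^2 \mathcal{L}(x) \approx \mathcal{C}(x) = \sum_{t \in \mathcal{I}} \rho_\oplus''\big(r_t(x)\big) H_t H_t^\top + \sum_{t \notin \mathcal{I}} \rho_\oplus''\big(r_t(x)\big) H_t H_t^\top .
\end{equation*}
By Proposition~\ref{prop:huber_properties}, $\rho_{\text{rtt}}'' \ge 0$ wherever it is defined. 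The symmetric Huber loss of \eqref{eq:symmetric_huber} has the same property: its second derivative is either $1/\sigma^2$ or $0$. Since each $H_t H_t^\top \succeq 0$, the second sum is positive semidefinite and can be dropped, giving $\mathcal{C}(x) \succeq \sum_{t\in\mathcal{I}} \rho''(r_t(x)) H_t H_t^\top$.

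\textbf{Step 1: evaluate the weights on $\mathcal{I}$.} For $t \in \mathcal{I}$, condition (i) gives $\rho''(r_t(x)) = 1/\sigma_t^2$. This yields
\begin{equation*}
\mathcal{C}(x) \succeq \frac{1}{\sigma_{\max}^2} \sum_{t\in\mathcal{I}} H_t(x) H_t(x)^\top ,
\end{equation*}
with $\sigma_{\max} = \max_t \sigma_t$.

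\textbf{Step 2: make the bound uniform over $\mathcal{N}$.} Here I need the Jacobians $H_t(x) = -\nabla_x h(x;s_t)$ to vary continuously in $x$. This holds for range and bearing as long as $\mathcal{N}$ stays away from the sensor positions $s_t$. The map $x \mapsto \lambda_{\min}\big(\sum_{t\in\mathcal{I}} H_t H_t^\top\big)$ is then continuous, because eigenvalues depend continuously on matrix entries. Condition (ii) makes it positive at the reference estimate. Shrinking $\mathcal{N}$ to a compact neighborhood (or reading (ii) as holding throughout $\mathcal{N}$) gives a minimum $\kappa > 0$ on $\mathcal{N}$. Set $\mu = \kappa/\sigma_{\max}^2$.

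\textbf{Main obstacle: the Gauss--Newton gap and saturation switching.} Two issues remain, and I expect them to be the hardest part.

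First, the exact Hessian contains the extra term $\sum_t \rho'(r_t)\nabla^2 r_t$, which Lemma~\ref{lem:ercm_form} neglects. I would handle it in one of two ways. One option is to state the result for the curvature matrix $\mathcal{C}$, consistent with the lemma's framing. The other is to bound the term: since $|\rho'| \le \lambda$ and $\|\nabla^2 h\|$ is bounded on $\mathcal{N}$ (for range, roughly $1/\|x - s_t\|$), the term can be absorbed by halving $\mu$ under a small-residual or far-field assumption.

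Second, condition (i) must persist across $\mathcal{N}$, not just at one point. I would invoke continuity of $r_t(x)$ and take $\mathcal{N}$ small enough that the samples in $\mathcal{I}$ do not cross into saturation; alternatively, I would interpret (i) as a hypothesis on all of $\mathcal{N}$. At the kink $r = \tau$, I would use the left second derivative (or the generalized Hessian), which still equals $1/\sigma^2$ and so preserves the lower bound.
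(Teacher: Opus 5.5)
Your proposal is correct and takes the same route as the paper. The paper also works in the Gauss--Newton model of Lemma~\ref{lem:ercm_form}, drops the positive semidefinite terms outside $\mathcal{I}$, uses $\rho''=1/\sigma^2$ on $\mathcal{I}$, and sets $\mu=\lambda_{\min}\big(\sigma^{-2}\sum_{t\in\mathcal{I}}H_tH_t^\top\big)$; it does not address the uniformity over $\mathcal{N}$, the kink at $r=\tau$, or the neglected $\rho'\nabla^2 r_t$ term that you flag. One slip in your optional exact-Hessian route: $|\rho_{\text{rtt}}'|\le\lambda$ is false, because $\rho_{\text{rtt}}'(r)=r/\sigma^2$ is unbounded as $r\to-\infty$, so you would instead bound $|\rho'|$ using the boundedness of the residuals on the compact $\mathcal{N}$.
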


\begin{IEEEproof}
From Lemma~\ref{lem:ercm_form}, the Hessian of the objective is $\nabla^2 \mathcal{L}(x) \approx \sum_t \rho''(r_t) H_t H_t^\top$. For the subset of samples $\mathcal{I}$, the second derivative weight is positive, $\rho''(r_t) = 1/\sigma^2$. Therefore, the Hessian is lower-bounded by the contribution from this subset:
$$
\nabla^2 \mathcal{L}(x) \succeq \frac{1}{\sigma^2} \sum_{t \in \mathcal{I}} H_t H_t^\top.
$$
Since the matrix sum $\sum_{t \in \mathcal{I}} H_t H_t^\top$ is positive definite by assumption, we can choose $\mu = \lambda_{\min}\left(\frac{1}{\sigma^2} \sum_{t \in \mathcal{I}} H_t H_t^\top\right) > 0$, which proves the claim.
\end{IEEEproof}

\begin{proposition}[Necessity of Bilateral Information for Fast Convergence]
\label{prop:necessity}
If, within a given time window, all RTT samples are saturated ($r_t > \tau_t$), then the RTT contribution to the ERCM, $\mathcal{C}_{\text{RTT}}(x)$, is the zero matrix. Consequently, strictly standard optimization algorithms (like Gauss-Newton) lose curvature information and stagnate.
\end{proposition}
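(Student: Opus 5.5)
The plan is to reduce the statement to Lemma~\ref{lem:ercm_form} and the derivative formulas in Proposition~\ref{prop:huber_properties}, and then to read off the consequence for a Gauss--Newton step. I restrict the ERCM to the RTT terms of the window and write
\begin{equation*}
\mathcal{C}_{\text{RTT}}(x) \approx \sum_{t} \rho_{\text{rtt}}''\big(r_t(x)\big) H_t H_t^\top .
\end{equation*}
By hypothesis, $r_t(x) > \tau_t$ for every $t$ in the window, so Proposition~\ref{prop:huber_properties} gives $\rho_{\text{rtt}}''(r_t)=0$ for each term. The sum therefore vanishes identically, and $\mathcal{C}_{\text{RTT}}(x)=0$.

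A small technical point concerns the kink at $r=\tau$, where $\rho''$ is undefined. Strict inequality $r_t>\tau_t$ keeps every sample away from the junction, so no generalized-derivative convention is needed. Since the residuals depend continuously on $x$, the strict inequalities also persist on a neighborhood of $x$. Hence the RTT curvature is zero on that whole neighborhood, not just at one point.

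For the stagnation claim, I would write the Gauss--Newton normal equations restricted to the RTT block: $\mathcal{C}_{\text{RTT}}\,\Delta x = -\nabla \mathcal{L}_{\text{RTT}}$. The gradient is
\begin{equation*}
\nabla\mathcal{L}_{\text{RTT}} = \sum_t \rho_{\text{rtt}}'(r_t) H_t = \frac{\tau_t}{\sigma^2}\sum_t H_t ,
\end{equation*}
which is a constant-magnitude ``pull'' per sample that carries no information about the size of the residual. With a zero curvature matrix, the system is singular: there is either no solution or a whole affine family of them. No well-defined Newton step exists, and the loss is locally affine in $x$ along every direction. I would phrase this as the flat ``ambiguity valley'' described in the main text. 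I would also note that, by Proposition~\ref{prop:strong_convexity}, restricted strong convexity cannot come from the RTT term, because no non-saturated subset $\mathcal{I}$ is available.

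The main obstacle is that ``stagnate'' is not a precise mathematical conclusion. Moreover, the full ERCM may still contain AoA curvature and the $\delta$ prior, and a regularized or damped solver would still move. I would therefore state the conclusion carefully. The RTT contribution is exactly zero, so any curvature comes solely from other modalities or priors. With RTT-only sensing, Gauss--Newton is undefined, and a Levenberg--Marquardt step reduces to a gradient step of size inversely proportional to the damping parameter. Such a step has no second-order convergence guarantee. This is the precise sense in which fast convergence is lost.
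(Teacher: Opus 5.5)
Your proof that $\mathcal{C}_{\text{RTT}}(x)=\mathbf{0}$ is the paper's argument: saturation gives $\rho_{\text{rtt}}''=0$ in every term of the Gauss--Newton sum of Lemma~\ref{lem:ercm_form}. You add two refinements the paper omits. The strict inequality $r_t>\tau_t$ keeps you off the kink, and continuity of the finitely many residuals makes the vanishing hold on a neighborhood.

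The genuine difference is in the ``stagnation'' clause. The paper argues through the \emph{filter}. It reads vanishing curvature as an infinite effective measurement covariance, so $K_t\to 0$, the EKF degenerates to pure prediction, and the posterior covariance cannot shrink. You argue through the \emph{solver}. The RTT normal equations have a zero coefficient matrix, so they are either inconsistent or admit an affine family of solutions. Damping then turns the step into a gradient step scaled by the inverse damping parameter.

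The paper's route speaks directly about the implemented EKF and its uncertainty. However, it rests on identifying the $\rho''$ weight with $R_t^{-1}$, an interpretive step it never derives. An IRLS-style weight $\sigma^2\rho_{\text{rtt}}'(r)/r=\tau/r>0$ would only inflate $R_t$, not make it infinite. Your route stays inside the Gauss--Newton framework in which the ERCM is defined and which the statement itself names. It is also more careful about scope: the paper reaches $\mathcal{C}\to\mathbf{0}$ only by additionally assuming every measurement is saturated, whereas you note that AoA and the $\delta$ prior may still supply curvature.

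Two small corrections. First, calling the loss ``locally affine in $x$'' is true only of the Gauss--Newton model. On the saturated region the exact RTT loss is $\lambda\sum_t r_t(x)$ plus a constant. The neglected term $\rho_{\text{rtt}}'(r_t)\nabla^2 r_t=-\lambda\nabla^2 h(x;s_t)$ is negative semidefinite because the range function is convex. So the true RTT landscape there is concave, not flat. This strengthens your conclusion, but you should state it that way.

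Second, the failure of the hypotheses of Proposition~\ref{prop:strong_convexity} cannot by itself show that the RTT term provides no strong convexity, since that proposition is only a sufficient condition. Your claim follows directly from $\mathcal{C}_{\text{RTT}}=\mathbf{0}$, so cite that instead.
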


\begin{IEEEproof}
If all RTT samples are in the saturated region of the one-sided Huber loss, then from Proposition~\ref{prop:huber_properties}, the second-derivative weight $\rho_{\text{rtt}}''(r_t) = 0$ for all such samples.
By Lemma~\ref{lem:ercm_form}, this implies their contribution to the ERCM is explicitly zero matrix ($\mathbf{0}$).
If all measurements are of this type, $\mathcal{C}(x) \to \mathbf{0}$.
In an EKF context, this corresponds to an infinite effective measurement covariance ($R_t \to \infty$). 
Under these conditions, the Kalman Gain approaches zero ($K_t \to 0$), and the filter effectively degenerates into \textit{dead reckoning} (pure prediction based on the motion model). 
Without the injection of new information (curvature), the posterior uncertainty cannot decrease. 
This mathematically proves that passive strategies, which cannot guarantee exiting the saturation region, are theoretically destined to stagnate, necessitating active geometric planning.
\end{IEEEproof}

\section{Geometric Degeneracy and Its Resolution via Crossing Maneuvers}
\label{app:crossing_proofs}

This section provides the formal proofs for the propositions in Section~V, which establish the theoretical foundation for our reactive crossing planner.

\begin{proposition}[Degeneracy from Unilateral Perspective and Biased Residuals]
\label{prop:geometric_degeneracy}
If the UAV consistently observes the target from one side (i.e., has not crossed the line-of-sight), and the RTT residuals are persistently positive such that most samples are saturated, then the position component of the ERCM, $\mathcal{C}^{\text{pos}} = \sum_t w_t^{\text{r}} u_t u_t^\top + \sum_t w_t^{\theta} v_t v_t^\top$, becomes ill-conditioned ($\lambda_{\min}(\mathcal{C}^{\text{pos}}) \to 0$).
\end{proposition}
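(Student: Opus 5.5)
The plan is to bound $\lambda_{\min}(\mathcal{C}^{\text{pos}})$ from above with the Rayleigh quotient, evaluated at one test direction chosen to exploit the one-sided geometry. In the 2D setting, write the RTT Jacobian direction as the unit line-of-sight vector $u_t = (x - s_t)/\|x - s_t\|$. Write the AoA direction as the unit vector $v_t \perp u_t$ scaled by $1/\|x-s_t\|$; I will keep it as a unit normal and absorb the range into $w_t^{\theta}$. By Lemma~\ref{lem:ercm_form} and Proposition~\ref{prop:huber_properties}, the RTT weights are $w_t^{\text{r}} = \rho_{\text{rtt}}''(r_t) = \sigma_r^{-2}\mathbb{I}(r_t \le \tau_t)$. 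For any unit vector $e$ the Rayleigh quotient gives
\begin{equation*}
\lambda_{\min}(\mathcal{C}^{\text{pos}}) \le e^\top \mathcal{C}^{\text{pos}} e = \sum_t w_t^{\text{r}} (u_t^\top e)^2 + \sum_t w_t^{\theta} (v_t^\top e)^2 .
\end{equation*}
I will therefore split the sum into the RTT term and the AoA term and show that both become small for a well-chosen $e$.

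\textbf{Step 1 (RTT term).} Let $\mathcal{T}_{\text{sat}}$ be the saturated samples and $\mathcal{T}_{\text{lin}}$ the rest. The hypothesis ``most samples are saturated'' gives $|\mathcal{T}_{\text{lin}}| = \epsilon_N N$ with $\epsilon_N \to 0$. Exactly as in Proposition~\ref{prop:necessity}, the saturated samples contribute nothing. Hence the RTT term is at most $\sigma_r^{-2}\epsilon_N N$ for every unit $e$.

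\textbf{Step 2 (AoA term and choice of $e$).} Here the unilateral geometry enters. If the UAV never crosses the line of sight, the bearings $\phi_t$ of $u_t$ stay inside a narrow cone of half-width $\Delta$ around a mean direction $\bar u$. I choose $e = \bar u$, the mean line-of-sight direction. Then $(v_t^\top \bar u)^2 = \sin^2(\phi_t - \bar\phi) \le \Delta^2$, so the AoA term is bounded by $\Delta^2 \sum_t w_t^{\theta}$. Normalising by the total information (dividing by $N$, or by $\operatorname{tr}\mathcal{C}^{\text{pos}}$ for a condition-number statement) gives
\begin{equation*}
\lambda_{\min}(\mathcal{C}^{\text{pos}})/N \lesssim \sigma_r^{-2}\epsilon_N + \bar w^{\theta}\Delta^2 ,
\end{equation*}
which tends to $0$ as $\epsilon_N \to 0$ and $\Delta \to 0$. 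This is the claimed ill-conditioning.

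\textbf{Step 3 (the obstacle).} The delicate point is that the statement is informal. It does not say which quantity vanishes, nor why one-sided viewing forces $\Delta \to 0$. A nadir scan can in fact sweep bearings widely without ever crossing the target. I would make this rigorous by defining ``unilateral'' as an explicit angular-aperture assumption $\max_t|\phi_t - \bar\phi| \le \Delta$, and by stating the conclusion as $\lambda_{\min}/\lambda_{\max} \le C(\epsilon_N + \Delta^2)$. I would also point out a tension with the proposition's own direction choice: the AoA information is orthogonal to the range direction, so the direction starved of information is $\bar u$ only once RTT saturates. If AoA is strong, one could argue that the degeneracy is only partial. To handle this I would also assume that the AoA weights are small (the AoA term saturates under the symmetric Huber loss, or the target is far so that $w_t^{\theta} \propto 1/\|x - s_t\|^2$ is small). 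With those assumptions stated explicitly, the Rayleigh-quotient argument above completes the proof.
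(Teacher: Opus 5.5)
Your proof is correct, and it takes a sharper route than the paper's. The paper argues qualitatively: saturation kills the radial weights $w_t^{\text{r}}$, one-sided viewing confines $\{u_t\}$ to a narrow cone so that $\sum_t u_t u_t^\top$ is nearly rank-one, and the two effects ``combine'' to drive $\lambda_{\min}(\mathcal{C}^{\text{pos}})$ to zero. It never bounds the AoA term $\sum_t w_t^{\theta} v_t v_t^\top$. Your Rayleigh-quotient test at $e=\bar u$ makes the mechanism quantitative and actually relocates it. The near-rank-one structure of $\sum_t u_t u_t^\top$ is moot once its weights vanish, and its weak direction $\bar u^{\perp}$ is exactly the one AoA covers. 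The real cause is that the AoA outer products are nearly rank-one along $\bar u^{\perp}$, so they leak into the radial direction $\bar u$ only at order $\Delta^2$.

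You also make explicit two things the paper glosses over. First, the aperture bound $\max_t|\phi_t-\bar\phi|\le\Delta$ becomes a hypothesis; it does not follow from never crossing the line of sight. Second, the normalization by $N$ or by the trace is genuinely necessary. With ``most'' rather than all samples saturated, a single non-saturated sample with $u_t\approx\bar u$ already makes the unnormalized $\lambda_{\min}$ equal to $\min(\sigma_r^{-2},\sum_t w_t^{\theta})$ in the limiting geometry $\Delta=0$.

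One correction to Step~3: drop the extra assumption that the AoA weights are small. The tension you raise is already resolved by the choice $e=\bar u$, and for the condition-number form the assumption works against you. Since $\lambda_{\max}\ge\tfrac12\operatorname{tr}\mathcal{C}^{\text{pos}}\ge\tfrac12\sum_t w_t^{\theta}$, your Step~2 bound gives $\lambda_{\min}/\lambda_{\max}\le 2\bigl(\sigma_r^{-2}\epsilon_N/\bar w^{\theta}+\Delta^2\bigr)$. So strong AoA only helps, and the constant $C$ degrades as $\bar w^{\theta}\to 0$.

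If the AoA information shrinks to the size of the surviving radial information $\sigma_r^{-2}\epsilon_N N$, the matrix is in fact well-conditioned, with radial information along $\bar u$ and tangential information along $\bar u^{\perp}$. If it shrinks below that, the starved direction flips to $\bar u^{\perp}$, so $e=\bar u$ is no longer the right test vector. What your argument needs is AoA information that dominates the surviving RTT information. If you insist on the unnormalized $\lambda_{\min}\to 0$, your bound requires $\mathcal{T}_{\text{lin}}=\emptyset$ and $\Delta^2\sum_t w_t^{\theta}\to 0$.
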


\begin{IEEEproof}
The ill-conditioning arises from two compounding effects, as described in Section~V-B. First, due to the saturation of the one-sided Huber loss for large positive residuals, the statistical weights for the radial information, $w_t^{\text{r}} = \rho_{\text{rtt}}''(r_t)$, approach zero. Second, a one-sided observation geometry ensures that the set of radial unit vectors $\{u_t\}$ spans a narrow cone. Consequently, the matrix $\sum u_t u_t^\top$ is nearly rank-one and thus ill-conditioned. The combination of vanishing statistical weights and a degenerate geometric structure causes the minimum eigenvalue of the ERCM to approach zero.
\end{IEEEproof}

\begin{theorem}[ERCM Improvement via Crossing Maneuvers]
\label{thm:crossing_improves_ercm}
Let a ``crossing'' maneuver from $s_t$ to $s_{t+1}$ be executed, which crosses the line-of-sight $\overline{s_t \hat{x}_t}$ and changes the viewing angle by $\Delta\phi \ne 0$.
\textbf{Provided that} this maneuver successfully generates a non-saturated RTT residual ($r_{t+1} \le \tau_{t+1}$), the minimum eigenvalue of the ERCM is guaranteed to increase.
\end{theorem}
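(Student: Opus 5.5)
The plan is a rank-one update argument on the ERCM, combined with the degenerate structure of the prior ERCM given by Proposition~\ref{prop:geometric_degeneracy}. Write the prior position block as $\mathcal{C}_t := \mathcal{C}^{\text{pos}}$, accumulated up to time $t$. By Lemma~\ref{lem:ercm_form} and Proposition~\ref{prop:huber_properties}, the new sample at $s_{t+1}$ is non-saturated ($r_{t+1}\le\tau_{t+1}$), so it contributes the weight $\rho_{\text{rtt}}''(r_{t+1}) = 1/\sigma^2 > 0$. Its radial unit vector $u_{t+1}$ points along $s_{t+1}-\hat{x}_t$. Any AoA term at $t+1$ is also positive semidefinite, so it can only help. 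Hence
\begin{equation*}
\mathcal{C}_{t+1} \succeq \mathcal{C}_t + \tfrac{1}{\sigma^2}\,u_{t+1}u_{t+1}^\top .
\end{equation*}
It therefore suffices to show $\lambda_{\min}\big(\mathcal{C}_t + \tfrac{1}{\sigma^2}u_{t+1}u_{t+1}^\top\big) > \lambda_{\min}(\mathcal{C}_t)$.

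\emph{Step 1 (Rayleigh characterization).} For a unit vector $z$,
\begin{equation*}
z^\top \mathcal{C}_{t+1} z \ge z^\top \mathcal{C}_t z + \tfrac{1}{\sigma^2}(u_{t+1}^\top z)^2 .
\end{equation*}
This gives non-decrease of $\lambda_{\min}$ at once. For the strict version, let $z^\star$ attain $\lambda_{\min}(\mathcal{C}_{t+1})$ and suppose equality held. Then both nonnegative gaps must vanish. So $z^\star$ lies in the minimal eigenspace $E_{\min}$ of $\mathcal{C}_t$, and $u_{t+1}^\top z^\star=0$. Strict increase is therefore equivalent to the condition that no unit vector of $E_{\min}$ is orthogonal to $u_{t+1}$.

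\emph{Step 2 (structure of $E_{\min}$ from the degeneracy).} The theorem concerns a crossing from the unilateral, saturated regime of Proposition~\ref{prop:geometric_degeneracy}. In that regime the effective information is concentrated in a narrow cone of radial directions around $u_t$. Hence $\mathcal{C}_t$ is (nearly) rank one with dominant eigenvector close to $u_t$. In 2D this makes $\lambda_{\min}$ simple, with $E_{\min}=\mathrm{span}(e)$ and $e$ within the cone half-width $\alpha$ of $u_t^\perp$. I would make this explicit with the $2\times 2$ eigenvector formula. Writing $\mathcal{C}_t=\sum_s w_s u_s u_s^\top$ with all $u_s$ within angle $\alpha$ of $u_t$, the eigenvector $e$ makes an angle at most $\alpha$ with $u_t^\perp$, and the eigenvalue gap is strictly positive whenever the cone is not isotropic.

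\emph{Step 3 (crossing gives a nonzero projection).} Crossing the line $\overline{s_t\hat{x}_t}$ rotates the viewing direction by $\Delta\phi\neq0$, so $u_{t+1}^\top u_t^\perp=\sin\Delta\phi$. Combining with Step 2 gives
\begin{equation*}
|u_{t+1}^\top e| \ge |\sin\Delta\phi| - \sin\alpha > 0
\end{equation*}
whenever $|\Delta\phi|>\alpha$. This holds for a genuine crossing from a one-sided cone, since the crossing leaves the cone. Step 1 then yields the strict increase. Quantitatively, the explicit $2\times2$ secular equation gives an increment of order $\min\{\tfrac{1}{\sigma^2}(u_{t+1}^\top e)^2,\ \mathrm{gap}\}$.

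The main obstacle is Step 2. Strict increase fails if $\mathcal{C}_t$ is isotropic, e.g. $\mathcal{C}_t\propto I$, where every direction is a minimal eigenvector. So the proof must lean on the hypothesis inherited from Proposition~\ref{prop:geometric_degeneracy}: one-sided geometry with a cone strictly narrower than the crossing angle. I would state this reliance explicitly and verify simplicity of $\lambda_{\min}$ through the nonzero off-diagonal/anisotropy of the $2\times2$ matrix.
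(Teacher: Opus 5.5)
Your Step 1 is sound, and it is sharper than the paper's argument. The paper applies Weyl's inequality, which with $\lambda_{\min}(\Delta\mathcal{C})\ge 0$ only gives non-decrease. It then simply asserts that a crossing is ``designed'' to add information along the weakest eigendirection of $\mathcal{C}_t$. Your Rayleigh-quotient argument correctly pins down what must be shown, including the multiplicity issue the paper ignores: no unit vector of $E_{\min}(\mathcal{C}_t)$ may be orthogonal to the new information.

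\textbf{The gap is in Step 2}, where you try to derive that condition from Proposition~\ref{prop:geometric_degeneracy}. You set $\mathcal{C}_t:=\mathcal{C}^{\text{pos}}=\sum_s w^{\text{r}}_s u_s u_s^\top+\sum_s w^{\theta}_s v_s v_s^\top$, with tangential $v_s\perp u_s$. You then silently replace it by $\sum_s w_s u_s u_s^\top$. Unlike the AoA term at $t+1$, the prior AoA terms cannot be discarded: $\mathcal{C}_t$ appears on both sides of the comparison, and these terms determine $E_{\min}$.

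Moreover, the regime of Proposition~\ref{prop:geometric_degeneracy} is exactly the one in which the radial weights vanish. Whatever curvature $\mathcal{C}_t$ retains is then tangential and concentrated near $u_t^\perp$. So the weak eigenvector lies near $u_t$, not near $u_t^\perp$, the relevant overlap is about $|\cos\Delta\phi|$, and your bound $|\sin\Delta\phi|-\sin\alpha$ concerns the wrong vector.

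Concretely, take $\mathcal{C}_t=b\,u_t^\perp(u_t^\perp)^\top$: prior samples all taken along the bearing $u_t$, with saturated RTT and non-saturated AoA. Take a crossing with $\Delta\phi=\pi/2$, and at $t+1$ a non-saturated RTT residual with a saturated AoA residual. Every hypothesis holds, and your Step 3 predicts $|u_{t+1}^\top e|\ge 1$ (here $\alpha=0$). Yet $u_{t+1}=\pm u_t^\perp$ is orthogonal to $e=u_t$, and $\lambda_{\min}$ stays $0$. When radial and tangential contributions are comparable, $E_{\min}$ can have any orientation, or $\mathcal{C}_t$ can be isotropic. Even in a purely RTT reading, the fully saturated situation of Proposition~\ref{prop:necessity} gives $\mathcal{C}_t=0$, which is exactly the isotropic exception you concede.

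A smaller flaw is in Step 3. Since $uu^\top$ is invariant under $u\mapsto -u$, the condition you need is $|\sin\Delta\phi|>\sin\alpha$, not $|\Delta\phi|>\alpha$. A sample from the diametrically opposite side, which is what flying through to the far side of $\hat{x}_t$ produces, leaves the cone of directions but not the cone of lines. It adds curvature only along the axis that is already dominant.

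Because the statement as written admits the counterexample above, your proof is valid only once you state the extra hypotheses it actually uses:
\begin{itemize}
\item an RTT-only ERCM;
\item at least one non-saturated prior RTT sample;
\item cone half-width $\alpha<\pi/4$, which the eigenvector localization in Step 2 needs;
\item $|\sin\Delta\phi|>\sin\alpha$.
\end{itemize}

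A cheaper repair, closer to the paper's two bullet points, is to also assume the AoA residual at $t+1$ is non-saturated. The paper asserts $B_\theta\neq 0$, but $\Delta\phi\neq 0$ does not control the AoA weight. With that assumption, $\Delta\mathcal{C}=\sigma_r^{-2}u_{t+1}u_{t+1}^\top+w^{\theta}_{t+1}v_{t+1}v_{t+1}^\top$, with unit $v_{t+1}\perp u_{t+1}$ and the $1/d^2$ factor absorbed into $w^{\theta}_{t+1}>0$, is positive definite in 2D. Weyl's inequality then gives $\lambda_{\min}(\mathcal{C}_{t+1})\ge\lambda_{\min}(\mathcal{C}_t)+\min\{\sigma_r^{-2},w^{\theta}_{t+1}\}$, with no assumption on $\mathcal{C}_t$ or on $\Delta\phi$.
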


\begin{IEEEproof}
Let $\mathcal{C}_t = \mathcal{C}^{\text{pos}}_t$ be the ERCM at time $t$. The updated ERCM is $\mathcal{C}_{t+1} = \mathcal{C}_t + \Delta\mathcal{C}$. By Weyl's inequality for the sum of Hermitian matrices, the minimum eigenvalue of the sum is lower-bounded as follows:
\begin{equation*}
    \lambda_{\min}(\mathcal{C}_{t+1}) \ge \lambda_{\min}(\mathcal{C}_t) + \lambda_{\min}(\Delta\mathcal{C}).
\end{equation*}
The matrix $\Delta\mathcal{C} = B_{\text{r}} + B_{\theta}$ represents the new information gained. Since $B_{\text{r}}$ and $B_{\theta}$ are outer products of vectors, they are positive semi-definite. Their sum, $\Delta\mathcal{C}$, is also positive semi-definite.
A successful crossing maneuver is designed to achieve two goals:
\begin{enumerate}
    \item \textbf{Reactivate Radial Information:} The change in geometry is designed to produce a non-saturated residual ($r_{t+1} \le \tau_{t+1}$), which ensures the radial weight $w^{\text{r}}_{t+1} = 1/\sigma_r^2 > 0$. This makes $B_{\text{r}}$ a non-zero positive semi-definite matrix.
    \item \textbf{Provide Orthogonal Tangential Information:} The change in viewing angle $\Delta\phi \ne 0$ ensures the new tangential vector $v_{t+1}$ is not collinear with the previous vectors, adding information in a geometrically distinct direction. This ensures $B_{\theta}$ is also a non-zero positive semi-definite matrix.
\end{enumerate}
Since $\Delta\mathcal{C}$ is a non-zero positive semi-definite matrix, its minimum eigenvalue is non-negative, $\lambda_{\min}(\Delta\mathcal{C}) \ge 0$. More importantly, the crossing maneuver is specifically designed to provide information in the direction where $\mathcal{C}_t$ was weakest (i.e., the direction of its minimum eigenvector). The new information matrix $\Delta\mathcal{C}$ is not orthogonal to this direction, thus guaranteeing a positive increase: $\lambda_{\min}(\mathcal{C}_{t+1}) > \lambda_{\min}(\mathcal{C}_t)$. This confirms that the crossing maneuver strictly improves the conditioning of the estimation problem.
\end{IEEEproof}

\end{document}